\documentclass[noinfoline]{imsart}\setcounter{tocdepth}{2}

%%%%%%%%%%%%
%% Packages for arXiv preprint
\usepackage[a4paper,top=3cm, bottom=3cm, left=2.5cm, right=2.5cm]{geometry}
\usepackage{fancyhdr}
%%%%%%%%%%%

%% Packages
\RequirePackage{amsthm,amsmath,amsfonts,amssymb}
\RequirePackage[numbers]{natbib}

\allowdisplaybreaks
\RequirePackage[colorlinks,citecolor=blue,urlcolor=blue]{hyperref}

\usepackage{mathrsfs}

\usepackage{comment}
\usepackage{caption}
\usepackage{mathtools}
\usepackage{float}
\usepackage{subcaption}
\captionsetup[figure]{font=small}
\usepackage{dsfont}
\usepackage{bbm}
\usepackage{amsmath}
\usepackage{algorithm}
\usepackage{eqparbox}
\usepackage{color, xcolor}
\usepackage{physics}
%%%%%%%%%%%%%%%%%%%%%%%%%%%%%%%%%%% Figures %%%%%%%%%%%%%%%%%%%%%%%%%%%%%%%%%%%%%%%%%%%%%
\usepackage{graphicx}
\graphicspath{{Images/}}
\usepackage{float}
\usepackage[figuresright]{rotating} %sidewaysfigure
\usepackage{subcaption}
\usepackage{pdfpages}
\usepackage{wrapfig}
\usepackage{tikz}
\usetikzlibrary{arrows.meta}

%%%%%%%%%%%%%%%%%%%%%%%%%%%%%%%%%% Tables + Lists %%%%%%%%%%%%%%%%%%%%%%%%%%%%%%%%%%%%%%%
\usepackage{array}
\usepackage{booktabs}
\usepackage{enumerate}
\usepackage{enumitem} %for no indent
\usepackage{multicol}
\usepackage{algorithm}
\usepackage{algorithmic}

%%%%%%%%%%%%%%%%%%%%%%%%%%%%%%%%% Customised Symbols

% Variables
\newcommand{\N}{\mathcal{N}}
\newcommand{\Prob}{\mathbb{P}}
\newcommand{\E}{\mathbb{E}}
\newcommand{\btheta}{\boldsymbol{\theta}}
\newcommand{\bx}{\boldsymbol{x}}
\newcommand{\bX}{\boldsymbol{X}}
\newcommand{\bu}{\mathbf{u}}
\newcommand{\be}{\mathbf{e}}
\newcommand{\bv}{\mathbf{v}}
\newcommand{\diff}{\mathrm{d}}
\newcommand{\defeq}{\vcentcolon=}
\newcommand{\eqdef}{=\vcentcolon}

%hightlighting commands

\allowdisplaybreaks

%%%%%%%%%%%%%%%%%%%%%%%%%%%%%%%%%%% Theorem Style %%%%%%%%%%%%%%%%%%%%%%%%%%%%%%%%%%%%%%%%%%%%%
\usepackage[capitalize]{cleveref}

\theoremstyle{plain}
\newtheorem{theorem}{Theorem}[section]
\newtheorem{lemma}[theorem]{Lemma}
% \crefname{lemma}{Lemma}{Lemmas}

\newtheorem{assumption}[theorem]{Assumption}
\newtheorem{corollary}[theorem]{Corollary}
% \crefname{corollary}{Corollary}{Corollaries}

\newtheorem{proposition}[theorem]{Proposition}
% \crefname{proposition}{Proposition}{Propositions}

\begin{document}

%%%%%%%%%%%%%%%%%%%
%%% For arXiv Preprint
\pagestyle{fancy}
\fancyhead{}
\renewcommand{\headrulewidth}{0pt}
\fancyhead[CE]{Tao, Chandak and Kulkarni}
\fancyhead[CO]{Tao, Chandak and Kulkarni}
%%%%%%%%%%%%%%%%%%%

\begin{frontmatter}
\title{Fast convergence of a Federated Expectation-Maximization Algorithm}

  \begin{aug}
    \author[B]{\fnms{Zhixu}~\snm{Tao}\ead[label=e3]{zhixu.tao@princeton.edu}},
    \author[A]{\fnms{Rajita}~\snm{Chandak}\ead[label=e1]{rajita.chandak@epfl.ch}}
    \and
    \author[B]{\fnms{Sanjeev}~\snm{Kulkarni}\ead[label=e2]{kulkarni@princeton.edu}}
\address[A]{Institute of Mathematics,
Ecole Polytechnique Federale de Lausanne\printead[presep={,\ }]{e1}}

\address[B]{Department of Operations Research and Financial Engineering,
Princeton University\printead[presep={,\ }]{e2,e3}}
\end{aug}

\begin{abstract}
Data heterogeneity has been a long-standing bottleneck in studying the
convergence rates of Federated Learning algorithms. In order to better
understand the issue of data heterogeneity, we study the convergence rate of the
Expectation-Maximization (EM) algorithm for the Federated Mixture of $K$ Linear
Regressions model (FMLR). We completely characterize the convergence rate of the EM
algorithm under all regimes of number of clients and number of data points per
client, with partial limits in the number of clients. We show that
with a signal-to-noise-ratio (SNR) that is atleast of order
$\sqrt{K}$, the well-initialized EM algorithm converges to the ground truth
under all regimes. We perform experiments on synthetic data to
illustrate our results. In line with our theoretical findings,
the simulations show that rather than being a bottleneck, data heterogeneity can
accelerate the convergence of iterative federated algorithms.
\end{abstract}

\begin{keyword}[class=MSC]
  \kwd[Primary ]{62H12}
\kwd[; secondary ]{62H30}
\end{keyword}

\begin{keyword}
\kwd{Federated learning}
\kwd{EM Algorithm}
\kwd{Data Heterogeneity}
\kwd{Convergence rate}
\end{keyword}

\end{frontmatter}

\maketitle

\section{Introduction}
Leveraging increasingly large datasets for improved estimation accuracy is now
feasible in the digital age. However, curating such datasets presents
challenges, notably the high computational and storage costs, as well as
significant privacy concerns associated with centralizing personal data. In
order to resolve
these issues, recent machine learning efforts have been directed towards
distributed storage of data with a modified central processing system that can
still leverage the larger volume of data to provide more accurate estimation for
each individual client.
This field of study is referred to as Federated Learning (FL).
This approach is intended to not only preserve the privacy of the clients but
also to reduce the computational costs \citep{mcmahan2017communication}.

One fundamental challenge in the study of FL estimation
is the presence of non-independent and identically distributed (non-i.i.d.) data.
A common cause of non-i.i.d.\ data is that each client may have a different
underlying data generating process (DGP) \citep{ye2023heterogeneous} which can
correspond to differing ground-truth parameters. In other
words, if $P_j$ denotes the DGP for a client $j$, then $P_j\ne P_{j'}$ for clients $j\ne j'$.
This non-i.i.d.\ data renders many standard statistical models
inconsistent \citep{kairouz2021advances}.
The goal is then to accurately capture the heterogeneity in the data generating
process while maintaining a sufficiently rich function class. In the
classical parametric setting, one natural formulation of this comes in
the form of the mixture of linear regressions (MLR) model \citep{de1989mixtures,
faria2010fitting}. The standard
formulation of the MLR setup assumes some fixed $K$ (either known or unknown)
number of unique linear regressions in the mixture.
This reduces the problem to identifying $K$ distinct feature coefficient vectors.
To extend this to the FL setting wherein the heterogeneity is distributed across
clients, we assume that each client sees data from only one of the $K$ elements in
the mixture. Then, conditional on the mixture component, each
client has i.i.d.\ data points. This means that all the heterogeneity is
captured in the latent variable assigned to each client.

In the traditional centralized machine learning setting (which is equivalent to
centralizing all the data from the clients), the Expectation-Maximization (EM)
\citep{dempster1977maximum} algorithm has been one of the most successful methods
for studying MLR.
This leads us to the primary question:
\textit{Can a federated version of the EM algorithm consistently the federated
MLR model?}

\subsection{Our contributions}
The primary goal of this paper is to study the generalization of the EM algorithm
to the federated mixture of linear regressions.
To the best of our knowledge, this paper presents the first known results
statistical guarantees of the EM algorithm across different federated regimes for
mixtures of $K \geq 2$ linear regression. In presenting our main theoretical
results, we identify conditions under which EM converges faster in the federated
setting than in the centralized one with specific comparisons to existing rates
in the literature. Our results generalize the 2-mixture federated model
studied in \cite{reisizadeh2023mixture} under weaker assumptions. Moreover,
through refined analysis, we demonstrate that, contrary to common belief,
larger separation between mixture components does not always lead to better
convergence rates (see Theorems~\ref{thm:pop_cons} and \ref{thm:emp_consistency}).
Finally, we also highlight the regimes in which the algorithm
converges in a constant number of iterations (see Corollary~\ref{cor.1}).

The remainder of the paper is structured as follows: Section~\ref{sec:related
work} provides a detailed overview of related literature. Section~\ref{sec:setup} formalizes the
federated MLR model and details some key assumptions. Section~\ref{sec:results} presents the
main theoretical results. Section~\ref{sec:experiments} empirically evaluates
EM’s performance and the tightness of our theoretical assumptions. Finally, we
conclude in Section~\ref{sec:conclusion} with some proposals for future avenues of research.

\section{Related Work} \label{sec:related work}
\textbf{Data Heterogeneity:}
As mentioned earlier, non-i.i.d.\ data can limit the convergence rates
of classical FL algorithms \citep{li2019convergence,
khaled2020tighter, koloskova2020unified,
woodworth2020minibatch}. A growing body of work focuses on designing
optimization methods to accelerate convergence under non-i.i.d.\ data. Recent
advancements include alternative aggregation methods \citep{ye2023feddisco} and
regularization techniques \citep{kim2022multi, t2020personalized,
shoham2019overcoming, yao2020continual, li2021fedbn, xu2022fedcorr}. For
instance, \cite{tenison2022gradient} uses masking on gradients during the
averaging step to improve the rate of convergence.
SCAFFOLD \citep{karimireddy2020scaffold} employs variance reduction techniques
to mitigate drift caused by data heterogeneity. FedProx
\citep{li2020federated} incorporates a proximal term to constrain local updates
closer to the global model, while FedBN \citep{li2021fedbn} adds a batch
normalization layer to local models to address data heterogeneity.

Training a single global model by treating all datasets equally is often
inefficient. For example, in next-word prediction, clients may use different
languages \citep{hard2018federated}, making it essential to learn multiple local
models. Personalized Federated Learning (PFL) \citep{smith2017federated}
is a growing sub-field for addressing such problems.
In this vein, \cite{li2021ditto} optimizes both local and global models via a globally
regularized Multi-Task Learning framework, while \cite{fallah2020personalized}
applies a Model-Agnostic Meta-Learning approach for personalization. FedAMP
\citep{huang2021personalized} uses attentive message passing to encourage
collaboration among similar clients, enhancing personalization.
Clustered Federated Learning (CFL) \citep{ghosh2020efficient} is another
prominent framework for addressing this fundamental disparity in data from
different clients.
This approach groups clients into clusters, where each cluster shares a common
model. Additional methods include minimizing the
distance to the global model \citep{long2023multi}, weighted clustering
\citep{ma2022convergence}, and local gradient descent
\citep{werner2023provably}. \cite{mansour2020three} provide an empirical overview
of how personalized and clustered strategies perform in practice.
\\
\\
\textbf{Mixture Models and EM Algorithm:}
A common approach to modeling data heterogeneity in either the centralized or
federated setup is through treating the data-generating process as a mixture
model (see \cite{marfoq2021federated, su2022global} for various formulations
under different structural assumptions). While methods like the spectral approach
\citep{kannan2005spectral} and Markov Chain Monte Carlo (MCMC)
\citep{geweke2007interpretation} are sometimes used to analyze these models, the EM
algorithm \citep{dempster1977maximum} remains particularly popular among
practitioners due to its computational efficiency.

Recent advances in the literature have established convergence results for the EM algorithm
applied to mixtures of linear regressions (MLR) in the centralized setting
\citep{klusowski2019estimating, daskalakis2017ten,
kwon2020algorithm, 10.1214/19-EJS1660}.
\cite{yi2014alternating, yi2016solving} provide convergence guarantees
for noiseless MLR. \cite{10.1214/16-AOS1435} characterizes the local region
where EM converges to a statistically optimal point. \cite{kwon2019global}
proves the global convergence of EM for two-component MLR, and
\cite{kwon2020converges} provides result for a well-initialized EM for general
$K$-component MLR, both in the centralized setting.

In the federated setting, studies have examined the performance of EM under
compression \citep{dieuleveut2021federated}, highly specialized MLR models
(symmetric, two-component Gaussian components)
\citep{reisizadeh2023mixture, wu2023personalized}, and
with outliers using gradient descent \citep{tian2023unsupervised}. However, a
comprehensive theory of Federated MLR (FMLR) studied using the EM algorithm
remains an open question.

\section{Problem Setup and EM Algorithm}\label{sec:setup}
We start by describing the FMLR generation model. We will introduce additional
relevant notation in the following section.
\subsection{The FMLR model}
\label{sec:fmlr}
Suppose each of the $m$ clients has a latent variable $Z_j\in [K]$ and observes $n$
pairs of independent and identically distributed data points
$\{(\bX_i^j, Y_i^j)_{i=1}^n \}$ generated from the $Z_j$-th linear regression defined by
the parameter $\btheta_{Z_j}^*$.
This data generating process is described in Algorithm \ref{alg:fmlr}.
\begin{algorithm}[h]
\caption{The FMLR Algorithm}\label{alg:fmlr}
\textbf{Input}: K, m, n, and $\btheta_1^*, \ldots, \btheta_K^*$\\
\textbf{Output}: $\{\bX_i^j, Y_i^j\}_{i=1, j=1}^{i=n, j=m}$
\begin{algorithmic}[1] %[1] enables line numbers
\FOR{j = 1, \ldots m}
\STATE Sample $Z_j\stackrel{\text{i.i.d.}}{\sim}\text{Unif}([K])$\COMMENT{latent
  variable, client ($m$) dependent}
\FOR{i = 1,\ldots n}
\STATE Sample
$\bX_i^j\stackrel{\text{i.i.d.}}{\sim}f_X$
    \COMMENT{predictor variables}\\
    Sample $\varepsilon_i^j\stackrel{\text{i.i.d.}}{\sim} f_{\varepsilon}$
    \COMMENT{noise}\\
   Generate
   $Y_i^j=\langle \bX_i^j, \btheta_{Z_j}^*\rangle+\varepsilon_i^j$
    \COMMENT{response variables}
\ENDFOR
\ENDFOR
\end{algorithmic}
\end{algorithm}
We note that this model inherently exhibits a clustered structure that
can be identified by grouping clients based on their latent variable $Z_j$.
Note that $\bX_i^j$ and $\varepsilon_i^j$ are independent of each other as well as
the latent variable $Z_j$ but $Y_i^j$ is not.
Furthermore, it is important to see that for each client $j$, there are $n$ pairs
of $\{\bX_i^j, Y_i^j\}_{i=1}^n$ sharing the same latent variable $Z_j$, which means
$\{\bX_i^j, Y_i^j, Z_j\}_{i=1}^n$ are not jointly i.i.d.

While there exist other formulations of data heterogeneity in FMLR modeling, we
restrict our work to this modelling scheme that focuses on data
heterogeneity caused by what is sometimes referred to in the literature as a
\textit{concept shift} \citep{kairouz2021advances},
where $P_j(x, y) \ne P_{j'}(x, y)$ for $j\ne j'$ arises from
$P_j(y|x) \ne P_{j'}(y|x)$
even if $P_j(x)$ is the same for all $j$. This can be understood in the
context of user preferences.
For example, when presented with identical collection of items,
different users may label items differently based on personal preferences that can be
categorized based on more general features like regional or demographic variations.

\subsection{Notation}\label{notation}
In this section we collect some notation that help in formulating our main
results in the next section.
\begin{itemize}
\item $d$: the dimensionality of the problem (i.e.\ number of features or covariates), known and fixed.
\item $\bX \in \mathcal{X} \subseteq \mathbb{R}^d$: collection of features
  (or covariates).
\item $Y \in  \mathcal{Y} \subseteq\mathbb{R}$: response variable.
\item $Z \in \{1, \ldots, K\} \defeq\mathcal{Z}$: latent (unobserved) variable
indicating the element of the mixture, uniformly distributed.
\item $K$: number of mixture components, known and fixed.
\item $m$: number of clients.
\item $n$: number of data points per client.
\end{itemize}
We use the set notation $[n] = \{1,\dots, n\}$ and therefore $\bX_{[n]} = \{\bX_1,\dots, \bX_n\}$.
The index $j\in [m]$ identifies the client while the index $i \in [n]$ denotes the observation. Moreover,
$f_{\btheta}(\cdot)$ denotes the probability density function of a continuous
(possibly multivariate) random variable with parameter $\btheta$, and
$g_{\btheta}(\cdot)$ denotes the probability mass function of a discrete random
variable with parameter $\btheta$.
We use $\|\cdot\|$ to denote the Euclidean norm.

Let $\btheta_k^*$ be the $k$-th ground truth coefficient vector for $k \in [K]$.
In our one-step analysis, we use
$\btheta_k$ and $\btheta_k^+$ to denote the current and the next estimates of
$\btheta^*_k$, respectively.
Empirical (data-dependent) estimates are denoted by $\widehat\btheta_k$ and
$\widehat\btheta_k^+$. Define the maximum and minimum separations between the
true coefficient vectors as
\[\Delta_{\max} :=\max_{k\ne k'}\|\btheta_k^*-\btheta_{k'}^*\|
  \quad\text{and}\quad
  \Delta_{\min}:=\min_{k\ne k'}\|\btheta_k^*-\btheta_{k'}^*\|,\]
respectively.
The signal-to-noise ratio (SNR) is given by
$\Delta_{\min}/\sigma$, where $\sigma$ is the variance of the noise. Moreover,
define $\mathbb{E}_{k}[\cdot]$, as the expectation with
respect to the joint distribution of $(\bX, Y)$ conditional on $Z=k$.
That is, $ \mathbb{E}_{k}[\cdot] = \mathbb{E}[\cdot \mid Z = k]$.
Finally, for two sequences $a_n$ and $b_n$, we write
$a_n = O(b_n)$ if $\limsup_{n \to \infty} \frac{a_n}{b_n} \leq c$, for some
constant $c >0$.

\subsection{EM Algorithm}
We present the EM algorithm specifically in the context of FMLR models.
For an overview of the EM algorithm in the classical (or centralized) setting see
\cite{dempster1977maximum, meng1997algorithm}.

We start by assuming the data generating process as described in
Algorithm~\ref{alg:fmlr}.
To estimate the parameters $\{\btheta_k^*\}_{k=1}^K$ in the presence of latent
variables, the EM algorithm approximates the MLE:
\begin{align}
  \label{eq:loglikelihood}
  \ell_m(\btheta) =
  \frac{1}{m}\sum_{j=1}^m \log
  \int_{\mathcal{Z}}f_{\btheta}(\bX^j_{[n]}, Y^j_{[n]},
  z_j)\diff z_j ,
\end{align}
which is not only typically a non-concave function, but also depends on
the unobserved latent variables, $z_j$ and so, in general, is intractable.
In order to bypass this dependency, the algorithm lower bounds
the log-likelihood defined by the following function:
\begin{align}
  \label{eq:qm}
  Q_m(\btheta|\widehat{\btheta}^{(t)})
  &= \frac{1}{m}\sum_{j=1}^m\int_{\mathcal{Z}}
    g_{\widehat{\btheta}^{(t)}}(z_j|\bX_{[n]}^j, Y_{[n]}^j)
    \log f_{\btheta}(\bX_{[n]}^j, Y_{[n]}^j, z_j)
    \diff z_j,
\end{align}
where $g_{\btheta'}(z|\bx_{[n]}, y_{[n]})$ denotes the conditional
probability mass function of $z$ conditional on $(\bx_{[n]}, y_{[n]})$ and
$\widehat{\btheta}^{(t)} = [\widehat{\btheta}^{(t)}_1, \ldots,
\widehat{\btheta}^{(t)}_K]$ is an estimate of the true parameters
The construction of  $Q_m$ is referred to as the E-step, since it removes
dependency on the latent variable, $Z$ by taking an expectation over it.
The EM algorithm then generates a new estimate for the parameter by maximizing
the approximation to the likelihood, $Q(\btheta|\widehat{\btheta}^{(t)})$ with respect to
$\btheta$. That is, the subsequent estimator produced by the algorithm given an
initial estimator $\widehat{\btheta}^{(t)}$ is defined as
\[
  \widehat{\btheta}^{(t+1)}
  = \arg\max_{\btheta\subset\Theta}Q_m(\btheta|\widehat{\btheta}^{( t )}).
\]
This is referred to as the M-step.
We note here that this setup trivially works for when each client has a
different number of data points, $n_m$, by defining $n = \min_m n_m$. Although,
in other generalizations of the DGP where the probability distribution of the
latent variable is non-uniform, it may be informative to use the varying number
of samples for each client.
In the above construction of the algorithm, we assume finite $m$ and $n$, which
we will refer to as the empirical algorithm.
However, for theoretical purposes it is helpful to consider the limiting
quantities (either with respect to $m$, $n$ or both). We refer to this as the
population version of the EM algorithm.
For our purposes it is interesting to consider the population quantity
with respect to the limit $m \to \infty$ only. The reason for this is that under
the $n \to \infty$ limit each client can be treated independently as a standard
estimation problem, removing the need for any federated approach.
We highlight that the population EM algorithm assumes that we have access to the
joint distribution $f_{\btheta^*} (\bx, y)$.
In particular, we can write down the population analog of $Q_m$, denoted by $Q$
as
\begin{align}
  \label{eq:pop_Q}
  Q(\btheta|\btheta^{(t)})
  &= \int_{\mathcal{X}\times
    \mathcal{Y}}
    \left(\int_{\mathcal{Z}} g_{\btheta^{(t)}}(z|\bx_{[n]}, y_{[n]})
    \log f_{\btheta}(\bx_{[n]}, y_{[n]}, z)\diff z \right)
    f_{\btheta^*}(\bx_{[n]}, y_{[n]})\diff \bx_{[n]}\diff y_{[n]}.
\end{align}
Without any further information on the distributions of any of the random
variables $\bX, Y$ or $Z$, we would stop here and any theoretical guarantees on
the algorithm would have to directly analyse either the $Q$ or $Q_m$ functions.
In pratice, it is near-impossible to get anything informative regarding the sequence
of parameter estimates $\{\widehat{\btheta}^{(t)}\}_{t\geq 1}$ in this minimal
assumption regime. For most pratical purposes it is helpful to place some
assumptions on the data generating model. In particular, we will choose to
operate under the standard Gaussian model.
\begin{assumption}[DGP]
  \label{as:dgp}
Let $\bX \sim \N(0, I_d)$ and $\varepsilon \sim \N(0, \sigma^2),$
where $\sigma > 0$ is a constant. Furthermore, $\bX \perp \varepsilon$.
\end{assumption}
We can now simplify the two steps of, both, the population and empirical EM
iterations, starting with the population EM.
\begin{proposition}[Population EM]
  \label{prop:population_EM}
  Suppose Assumption~\ref{as:dgp} holds and
  $\{(\bX_i, Y_i)\}_{i=1}^n$ are generated by Algorithm~\ref{alg:fmlr} with $m =
  \infty$.
  Then one iteration of the population EM, given the current estimates
  $\btheta_k, k \in [K]$, is given by
  \begin{align*}
    &\text{E-Step:}\quad w_k(\btheta) =
      \frac{\exp(-\frac{1}{2\sigma^2}
      \sum_{i=1}^n(Y_i-\langle \bX_i,\btheta_k\rangle)^2)}
      {\sum_{l=1}^K\exp(-\frac{1}{2\sigma^2}
      \sum_{i=1}^n(Y_i-\langle \bX_i,\btheta_l\rangle)^2)}
    && \forall k \in [K],
    \\
    &
      \text{M-Step:}\quad
      \btheta^+_k
      =
      \mathbb{E}\left[w_k(\btheta) \sum_{i=1}^n\bX_i\bX_i^T\right]^{-1}
      \mathbb{E}\left[w_k(\btheta)\sum_{i=1}^nY_i\bX_i^T\right]
    && \forall k \in [K].
  \end{align*}
\end{proposition}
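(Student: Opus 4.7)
The plan is to unpack the abstract definitions of $g_{\btheta}$ and $Q$ by substituting the concrete FMLR likelihood under Assumption~\ref{as:dgp}, and then perform the resulting cancellations and optimization. The starting point is the complete-data joint density for a single client: using Algorithm~\ref{alg:fmlr} together with the independence of $x$ and $\varepsilon$ and the uniform prior on $z$,
\[
 f_{\btheta}(x_{[n]}, y_{[n]}, z=k)
 = \tfrac{1}{K} \prod_{i=1}^n \phi_d(x_i)\,(2\pi\sigma^2)^{-1/2}\exp\!\left(-\tfrac{(y_i-\langle x_i,\theta_k\rangle)^2}{2\sigma^2}\right),
\]
where $\phi_d$ is the standard Gaussian density on $\mathbb{R}^d$. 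For the E-step, I would apply Bayes' rule to obtain $g_{\btheta}(z=k\mid x_{[n]},y_{[n]})$. The prior factor $1/K$, every $\phi_d(x_i)$, and the Gaussian noise normalizers are all independent of $k$, so they cancel between numerator and denominator, leaving exactly the softmax expression for $w_k(\btheta)$ stated in the proposition.

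For the M-step, since $z$ is discrete, $Q$ becomes $Q(\btheta'|\btheta)=\mathbb{E}\!\left[\sum_{k=1}^K w_k(\btheta)\log f_{\btheta'}(x_{[n]},y_{[n]},z=k)\right]$, and $\theta_k'$ appears only inside the $k$-th summand, through the quadratic $-\tfrac{1}{2\sigma^2}\sum_i(y_i-\langle x_i,\theta_k'\rangle)^2$. Differentiating under the expectation (justified by dominated convergence using Gaussian tails) gives
\[
 \nabla_{\theta_k'}Q(\btheta'|\btheta) = \tfrac{1}{\sigma^2}\,\mathbb{E}\!\left[w_k(\btheta)\sum_{i=1}^n(y_i-\langle x_i,\theta_k'\rangle)x_i\right].
\]
Setting this to zero yields a linear system $A_k\theta_k^+ = b_k$ with $A_k = \mathbb{E}[w_k(\btheta)\sum_i x_i x_i^T]$ and $b_k=\mathbb{E}[w_k(\btheta)\sum_i y_i x_i]$, which rearranges to the claimed closed form for $\theta_k^+$.

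The main, though mild, obstacle is to verify two auxiliary facts that make this formula meaningful. First, $A_k$ must be invertible: this follows because $w_k(\btheta)\in(0,1)$ almost surely (a ratio of strictly positive exponentials) and $x_i \sim \mathcal{N}(0,I_d)$ is non-degenerate, so $A_k\succ 0$. Second, the critical point must be a global maximum in $\theta_k'$, which follows from concavity: the Hessian of $Q$ in $\theta_k'$ equals $-A_k/\sigma^2\preceq 0$, and separability across $k$ means maximizing each coordinate block independently recovers the joint maximizer. These two observations together complete the derivation.
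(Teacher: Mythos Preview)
Your proposal is correct and follows essentially the same route as the paper: both write the complete-data likelihood under the FMLR model with Gaussian noise, apply Bayes' rule to obtain the softmax posterior $w_k(\btheta)$, substitute into $Q(\btheta'|\btheta)$, and differentiate the resulting quadratic in $\theta_k'$ to obtain the normal equations. Your additional checks that $A_k\succ 0$ and that the critical point is a global maximizer via concavity are not made explicit in the paper's proof, so in that respect your argument is slightly more complete.
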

The proof of this proposition is deferred to
the Appendix. See Appendix~\ref{sec:proofs_propositions} for the proof all
results in this section.

\begin{proposition}[Empirical EM]
  \label{prop:empirical_EM}
  Suppose Assumption~\ref{as:dgp} holds and $\{(\bX_i^j, Y_i^j)\}_{i=1, j=1}^{i=n, j=m}$
  are generated by Algorithm~\ref{alg:fmlr}.
  Then one iteration of the empirical EM, given the current
  estimates $\widehat{\btheta}_k, k \in [K]$, is given by
  \begin{align*}
    &\text{E-Step:}\quad
      w_k^j(\widehat\btheta)
      = \frac{\exp(-\frac{1}{2\sigma^2}
      \sum_{i=1}^n(Y_i^j-\langle \bX_i^j,\widehat\btheta_k\rangle)^2)}
      {\sum_{l=1}^K\exp(-\frac{1}{2\sigma^2}
      \sum_{i=1}^n(Y_i^j-\langle \bX_i^j,\widehat\btheta_l\rangle)^2)}
    && \forall k \in [K],
    \\
    &\text{M-Step:}\quad
      \widehat{\btheta}^+_k
      = \left(\sum_{j=1}^mw_k^j(\widehat\btheta) \sum_{i=1}^n\bX_i^j\bX_i^{jT}\right)^{-1}
      \left[ \sum_{j=1}^mw_k^j(\widehat\btheta)\sum_{i=1}^nY_i^j\bX_i^j \right]
    && \forall k \in [K].
  \end{align*}
\end{proposition}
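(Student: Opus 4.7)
The plan is to expand the empirical $Q$-function in~(\ref{eq:qm}), compute the posterior of $z_j$ in closed form under Assumption~\ref{as:dgp}, and then solve the resulting weighted least-squares problem to obtain the M-step update. Since $z_j$ is discrete with values in $[K]$, the integral over $\mathcal{Z}$ collapses to a sum, giving
\[
Q_m(\btheta'\mid\tilde\btheta) = \frac{1}{m}\sum_{j=1}^m \sum_{k=1}^K g_{\tilde\btheta}(z_j = k\mid x_{[n]}^j, y_{[n]}^j)\, \log f_{\btheta'}(x_{[n]}^j, y_{[n]}^j, z_j = k).
\]

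\noindent\textbf{E-step.} I would first compute $g_{\tilde\btheta}(z_j = k\mid x_{[n]}^j, y_{[n]}^j)$ via Bayes' rule. Because Algorithm~\ref{alg:fmlr} draws $z_j$ uniformly on $[K]$ and the covariates $x_i^j$ are independent of $z_j$ by Assumption~\ref{as:dgp}, the uniform prior and the marginal factor $\prod_i f(x_i^j)$ cancel between the numerator and the denominator. Conditional on $z_j = k$, one has $y_i^j \mid x_i^j, z_j = k \sim \mathcal{N}(\langle x_i^j, \tilde\theta_k\rangle, \sigma^2)$ independently across $i$, so multiplying the resulting Gaussian densities and cancelling the common $(2\pi\sigma^2)^{-n/2}$ normalizers reproduces exactly the claimed formula for $w_k^j(\tilde\btheta)$.

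\noindent\textbf{M-step.} Plugging the weights back into the display for $Q_m$ and discarding all $\theta'_k$-independent terms (the $x$-marginal contributes only an additive constant, and the summands with $l\neq k$ carry no $\theta'_k$), the portion of $Q_m$ that depends on $\theta'_k$ reduces to
\[
-\frac{1}{2m\sigma^2}\sum_{j=1}^m w_k^j(\tilde\btheta) \sum_{i=1}^n (y_i^j - \langle x_i^j, \theta'_k\rangle)^2.
\]
Setting the gradient with respect to $\theta'_k$ to zero yields the weighted normal equations
\[
\left(\sum_{j=1}^m w_k^j(\tilde\btheta) \sum_{i=1}^n x_i^j x_i^{jT}\right) \theta'_k = \sum_{j=1}^m w_k^j(\tilde\btheta) \sum_{i=1}^n y_i^j x_i^j,
\]
and inverting the weighted Gram matrix—which is almost surely invertible as soon as $mn\geq d$ by standard Gaussian anti-concentration—gives the stated closed form $\tilde\theta_k^+$.

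\noindent\textbf{Main obstacle.} The derivation is essentially bookkeeping rather than real analysis; the one subtlety worth flagging is the factorization $f_{\btheta'}(x_{[n]}^j, y_{[n]}^j, z_j) = g(z_j) \prod_i f(x_i^j)\, f_{\btheta'}(y_i^j \mid x_i^j, z_j)$, which relies on $x_i^j \perp z_j$ and isolates all $\btheta'$-dependence in the conditional densities of $y_i^j$. Once this factorization is in place, both steps reduce to standard Bayes-rule and weighted-least-squares computations.
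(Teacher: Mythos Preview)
Your proposal is correct and follows exactly the same route as the paper: compute the posterior $g_{\tilde\btheta}(z_j\mid x_{[n]}^j,y_{[n]}^j)$ via Bayes' rule under the Gaussian model to obtain $w_k^j$, then differentiate the resulting weighted sum-of-squares in $Q_m$ to get the normal equations and the closed-form M-step. The paper's own proof simply points back to the derivation of Proposition~\ref{prop:population_EM} with expectations replaced by sample averages over the $m$ clients, which is precisely what you have written out in full.
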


\section{Main Results}\label{sec:results}
We are now ready to present our main theoretical result.
It is natural to break this up into two distinct statements, one
for the population EM and one for the empirical EM.
We start by making an assumption on the initialization of the algorithm that
ensures identifiability of the solution.
\begin{assumption}[Identifiability]
  \label{as:identifiability}
  The initial estimates, $\{\widehat{\btheta}^{(0)}_k: k \in [K]\}$, are chosen such that
  \begin{align*}
    \|\widehat{\btheta}^{(0)}_k-\btheta_k^*\|
    \leq
    \alpha \Delta_{\min}
    \ \forall \
    k \in [K]
  \end{align*}
  where $\alpha \in (0, 1/4)$ is a constant.
  Furthermore, $\widehat{\btheta}^{(0)}_k = \btheta^{(0)}_k$ for all $k \in [K]$.
\end{assumption}

This type of assumption is very common in the literature of mixture models,
albeit with different range of values permitted for $\alpha$ (which varies
depending on the other assumptions of the model).
By ensuring the initializations are closest (in euclidean distance) to
a single true component, the initialized model is well-defined and so are the
correspoding iterates of the algorithm. It guarantees, in essence, that a single
initialization cannot converge to two different ground truth vectors.

We now state the uniform convergence result for the population EM.
\begin{theorem}[Uniform consistency]
  \label{thm:pop_cons}
  Suppose Assumptions~\ref{as:dgp} and~\ref{as:identifiability} hold.
  If $\text{SNR} \gtrsim \sqrt{K}$,
  then the estimates generated after one iteration of the Population EM
  algorithm (as defined in Proposition~\ref{prop:population_EM}) satisfy
  \begin{align*}
    \max_{k\in [K]}\|\btheta_k^+-\btheta_k^*\|
    &\lesssim
      \frac{\alpha \Delta_{\min}\sqrt{n}\sigma e^{-C_\alpha n}}{1-Ke^{-n}}
      +\frac{\Delta_{\max}e^{-n/K^2}}{\sqrt{n}(1-Ke^{-n})}
    +\frac{e^{-n}}{\sqrt{n}}.
  \end{align*}
  where $C_{\alpha}= \frac{(1-4\alpha)^2}{64 \alpha^2}$.
\end{theorem}
The proof of this theorem and all othe results in this section are provided in
Appendix~\ref{appendix:proofs_theorems}. Additional details with regards to the
rates are included in the Appendix. The interested reader may consider the
details of additional technical results in Appendix~\ref{appendix:technical},
which are used heavily in the proofs of the main theorems.

From Theorem~\ref{thm:pop_cons}, we can see that provided we start with a
relatively good initialization, conditional on $\Delta_{\max}$ and
$\Delta_{\min}$ being well-controlled, one step of the population EM will
converge to the true parameters.
This explicit dependency of the error on the magnitude (as defined by
$\Delta_{\min}$ and $\Delta_{\max}$) of the problem is possibly counter-intuitive.
Most literature on cluster identification makes the assumption that the larger
the distance between clusters, the easier it is for iterative algorithms like EM
to identify the true cluster centers \citep{10.1214/16-AOS1435,
  kwon2020converges, kwon2019global}, and thus this
quantity is not typically explicitly captured in the error bounds.
Our result shows that, in the case of federated EM, prohibitively large maximal
distances between two clusters actually implies a
larger $l_2$ error. We conjecture this is due to the fact that in identifying
the correct centers, individual center-level accuracy is sacrificed in some
sense for worst-case error due to the partial dependency structure of the data.
This hypothesis is verified and discussed further with simulations in
Section~\ref{sec:experiments}.

In order to complete our analysis of the one-step federated EM algorithm, we now
present the convergence of the empirical EM algorithm.
\begin{theorem}[Empirical uniform consistency]
  \label{thm:emp_consistency}
  Suppose Assumptions~\ref{as:dgp} and~\ref{as:identifiability} hold.
  Furthermore, assume the following constraints on the model parameters:
  \begin{enumerate}
  \item $n \gtrsim \log(K)$,
  \item $m \gtrsim K\log(K)$,
  \item $\text{SNR} \gtrsim \sqrt{K}$ and,
  \end{enumerate}
  If we define
  $D_t \defeq \max_{k\in [K]}\|\widehat{\btheta}^{(t)}_k-\btheta_k^*\|\le\alpha\Delta_{\min}$
  as the worst-case error of the current empirical iterate.
  Then, with probability at least $1-3\delta/K^2$, the estimates generated after
  one iteration of the empirical EM algorithm (see Proposition~\ref{prop:empirical_EM})
  is controlled by
  \begin{align*}
    &\quad\max_{k\in [K]}\|\widehat{\btheta}_k^{(t+1)}-\btheta^*_k\|
    \precsim
    \begin{cases}
      \frac{D_t}{mn^{1/4}}
      +\frac{\Delta_{\max}}{m\sqrt{n}}
      + (n^{3/2}\Delta_{\min} + n\Delta_{\max})e^{-n}
      &
        \text{if }\ m \lesssim \exp(n)
      \\
       \frac{KD_t}{n^{1/4}}e^{-(C_\alpha-1) n/2} + K\sigma\sqrt{\frac{d}{n}}e^{-n} + \frac{e^{-n}}{n^{1/4}}
      &
        \text{if }\  m \gtrsim \exp(n)
    \end{cases}
  \end{align*}
\end{theorem}
As we see in the statement of the theorem, the precise rate of convergence
depends on the relationship between the two key variables $m$ and $n$. The error
bound consists of two parts: the approximation error that comes from analyzing
$\|\widehat\btheta_k^+-\btheta_k^+\|$ and the generalization error that comes
from $\|\btheta_k^+-\btheta_k^*\|$.
If we ignore the additional parameters like $K, d$ and $\sigma$, that under our
settings are assumed to be constants, the approximation error is the leading
term in the rate (see proof in Appendix \ref{proof:empirical_thm} for a complete
expression of the rate including dependency on all other parameters). However,
when $m$ is sufficiently large (on the order of eponential in $n$), the
approximation error is overtaken by the population error. This result is
consistent with many of the standard results dealing with convergence of
estimators. The key difference here is the fact that $n$ is always treated as a
finite constant and as such the population error could in fact contribute to the
total error in a non-trivial manner if $n$ is small and $m$ is relatively large.

Theorem \ref{thm:emp_consistency} also shows how the maximum
separation $\Delta_{\max}$
affects the convergence rate depending on the magnitude of $m$ and $n$. Unlike
existing literature, which identifies $\Delta_{\max}$ in a restricted regime
(i.e.\ specific range of $n$ or centralized EM)
\citep{10.1214/16-AOS1435, klusowski2019estimating, kwon2020converges, 10.1214/19-EJS1660,
yan2017convergence, reisizadeh2023mixture}, we have accounted for the role of
$\Delta_{\max}$ across all regimes. When $m$ grows no faster than exponentially
in $n$, the effect of $\Delta_{\max}$ is controlled by $m\sqrt{n}$. Thus, a
prohibitively fast growth of $\Delta_{\max}$ can actually lead to the federated
EM algorithm converging to the wrong mixture model, and a more careful
application of the EM algorithm would be required to verify whether the error
can be made to vanish.

The following corollary highlights the number of iterations that will be
required under the assumptions of Theorem~\ref{thm:emp_consistency} to enure a loss of at
most $\varepsilon$.
\begin{corollary}\label{cor.1}
  Suppose the assumptions from Theorem~\ref{thm:emp_consistency} hold.
  In addition, if $m \lesssim \exp(n)$
  and $\frac{\Delta_{\max}}{m\sqrt{n}}\leq \varepsilon/2$ then,
  $\max_{k\in [K]}\|\widehat\btheta^{(T)}_k-\btheta_k^*\|\le \varepsilon$
  for
  \begin{align*}
    T\geq
    \frac{2\log(\frac{\alpha \Delta_{\min}}{\varepsilon})}
    {\log(mn^{1/4})}
  \end{align*}
  with probability $(1-3\delta/K^2)^T$ for any $\varepsilon > 0$.
  \\
  When $m \gtrsim \exp(n)$,
  $\max_{k\in [K]}\|\widehat\btheta^{(T)}_k-\btheta_k^*\|\le \varepsilon$
  for
  \begin{align*}
    T \geq \frac{\log(\frac{2\alpha\Delta_{\min}}{\varepsilon})}{n
    +\frac{1}{4}\log n - \log K} = O(1)
  \end{align*}
  with probability $(1-3\delta/K^2)^T$ for any $varepsilon>0$.
\end{corollary}
Compared to the classical EM algorithm, federated EM achieves faster convergence
in certain regimes. In particular, note that for $m$ and $n$ sufficiently large,
Corollary~\ref{cor.1} implies a constant number of convergence, while
in the classical setting, previous results have required a growing number of
iterations. For example \cite{kwon2020converges} establish a linear dependency of
the number of iterations $T$ on $n$. \cite{kwon2019global}, for the specialized
$K=2$ case, and \cite{reisizadeh2023mixture} in the specialized $K=2$ case for
the federated model both require $T$ to grow logarithmically in $n$ (in $mn$ for
the federated setting) for convergence. We note that here
\cite{reisizadeh2023mixture} do show convergence in
constant number of iterations for the specialized $K=2$ model under stronger
assumptions on the relationship between $m$ and $n$ as well as on
initialization. We generalize their results on both fronts and extend the
theory to a general number of mixtures. We conjecture from our analysis that this
phenomenon occurs because data points on the same client share the same latent
variable, eliminating the need to identify the cluster membership of each
individual data point once the latent variable of a client has been determined.
Consequently, the clustering task becomes easier and more efficient.

\section{Experiments}\label{sec:experiments}
In this section, we evaluate the performance of the federated EM
algorithm using simulated datasets that satisfy the
assumptions for which we have established theoretical results.
In Figures \ref{fig:experiments_with_n}-\ref{fig:exp_with_max}, the
left subplot shows the average maximum error
($\max_{k\in[K]}\|\btheta_k^T-\btheta_k^*\|$)
over 100 repititions and the right subplot
shows the average number of iterations required to converge over 100
repititions with respect to the number of clients $m$.
For each experiment, we randomly initialize $\{\btheta_k\}_{k=1}^K$
to satisfy Assumption~\ref{as:identifiability} with $\alpha=1/5$
and we set $\sigma = 1$ for simplicity.
For a complete description of all parameters used in each simulation, we refer
the reader to Appendix~\ref{experiment details}.
Furthermore, all replication files can be found on
\href{https://github.com/rajitachandak/FederatedEM\_replication}{Github}.

We begin by examining the effect of the number of data points $n$ that each
client holds on the convergence rate
in Figures \ref{fig:exp_with_small_n} and \ref{fig:exp_with_large_n}.
Figure~\ref{fig:exp_with_small_n} shows how the EM algorithm behaves when $m$ grows
at least polynomially in $n$, while Figure \ref{fig:exp_with_large_n} shows the
behavior when $m$ is independent of $n$. In both cases,
the algorithm converges to the ground truth after a near-constant number of iterations.
The key takeaway is that the EM algorithm performs well in both cross-silo
(small $m$, large $n$, e.g., few companies with lots of data),
and cross-device FL (small $n$, large $n$ e.g., millions of mobile devices with
few data points).
\begin{figure}[h]
    \begin{subfigure}{.5\textwidth}
    \centering
    \includegraphics[width=\textwidth]{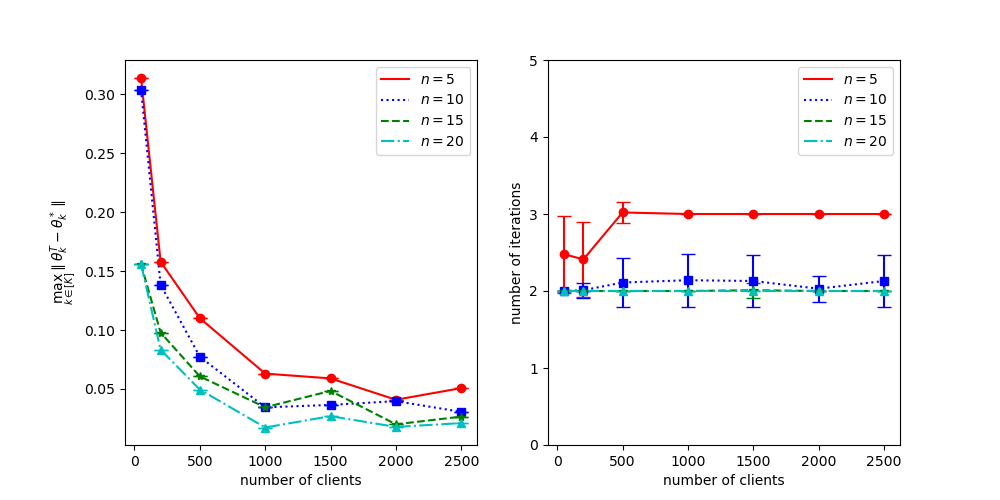}
    \caption{Effect of small $n$}
    \label{fig:exp_with_small_n}
    \end{subfigure}
    \begin{subfigure}{.5\textwidth}
    \centering
    \includegraphics[width = 1.0\textwidth]{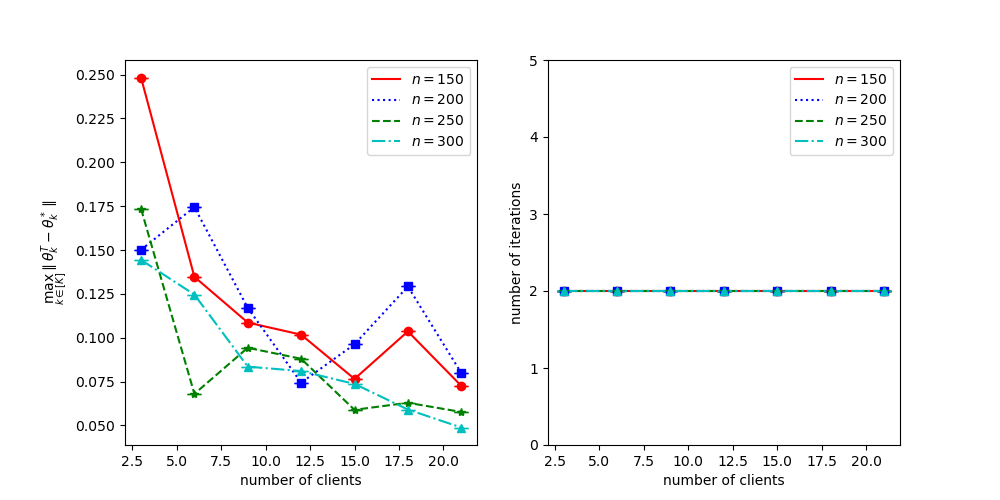}
    \caption{Effect of large $n$}
    \label{fig:exp_with_large_n}
    \end{subfigure}
\caption{Effect of number of data points $n$}
\label{fig:experiments_with_n}
\end{figure}
Figure \ref{fig:exp_with_K} shows the effect of number of clusters $K$ on the
convergence rate. We notice here that when the number of components in the
mixture model increases, the algorithm generally requires more iterations to
converge however the growth in the number of iterations is not even
polynomial with respect to the number of clusters, which is an important
consideration for the scalability of the algorithm. This observation aligns with
our theoretical findings (see Appendix~\ref{proof:empirical_thm} for details).

Figure \ref{fig:exp_with_d} shows the effect of dimensionality $d$ on the
convergence rate. We see that the average maximum error increases with $d$ over
$m$. Furthermore, we observe that higher dimensionality increases the number of
iterations required for convergence generally. However, it is unclear from the
simulations and the theory as to whether the dependecny observed is optimal in
any sense. The high-dimensional properties (i.e., when
$ d \propto n$) of EM remains an open question, even in the centralized setting.
\begin{figure}[h]
  \begin{minipage}[c]{0.5\linewidth}
    \centering
    \includegraphics[width = 1.0\textwidth]{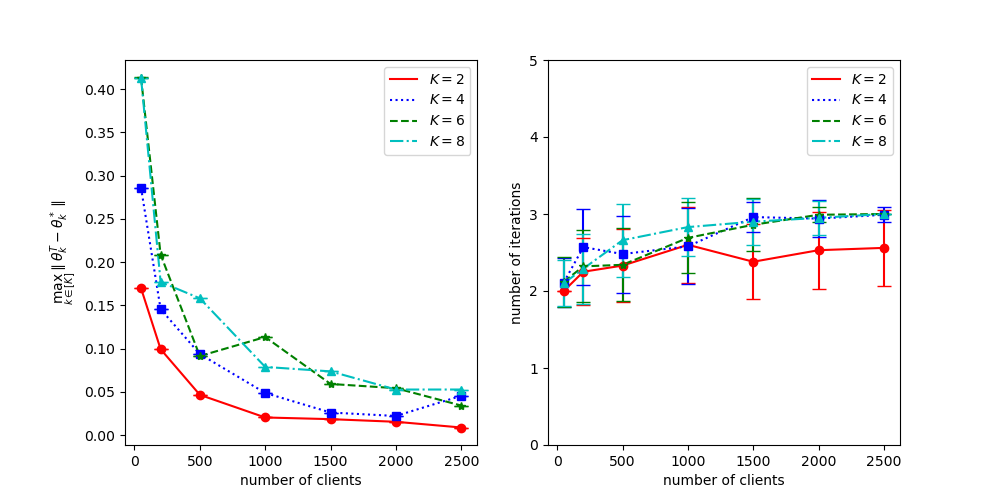}
    \caption{Effect of number of clusters $K$}
    \label{fig:exp_with_K}
  \end{minipage}
  \hfill
  \begin{minipage}[c]{0.5\linewidth}
    \centering
    \includegraphics[width = 1.0\textwidth]{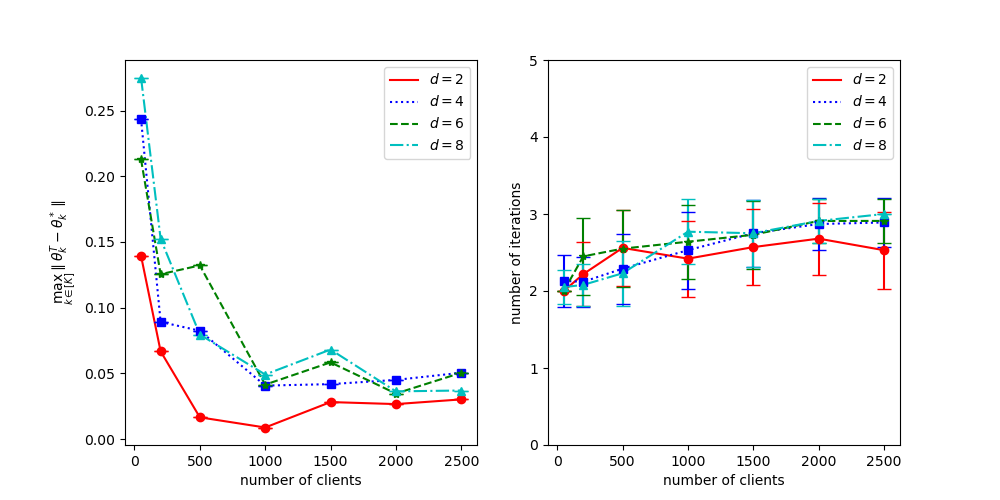}
    \caption{Effect of dimension $d$}
    \label{fig:exp_with_d}
  \end{minipage}
\end{figure}

Figure \ref{fig:exp_with_SNR} shows the effect of SNR on the convergence rate.
As the SNR increases, the algorithm appear to converge faster with
smaller Euclidean error. It is also worth noting Theorems
\ref{thm:pop_cons} and \ref{thm:emp_consistency} suggest that a lower bound of SNR for
identifiability of the solution should be given by $\sqrt{K}$
which in our simulations for $K=3$, Figure~\ref{fig:exp_with_SNR} shows that
when the SNR is less than $\sqrt{3}$, the algorithm requires significantly more
iterations to converge. The error of the converged iterates also seems to depend
on the SNR. It remains unclear whether the bound on SNR found in our theory is
the tightest possible bound.
\begin{figure}
  \begin{minipage}[c]{0.5\linewidth}
    \centering
    \includegraphics[width = \textwidth]{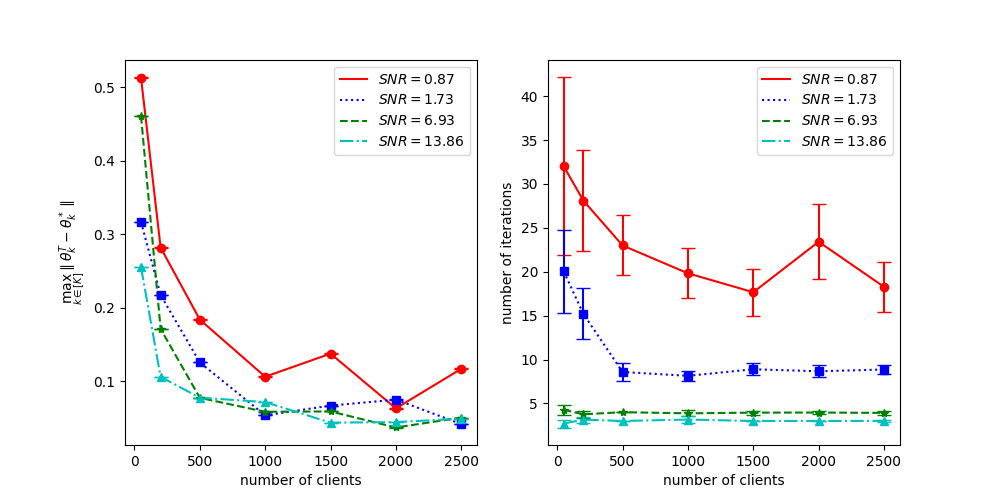}
    \caption{Effect of SNR}
    \label{fig:exp_with_SNR}
  \end{minipage}
  \hfill
  \begin{minipage}[c]{0.5\linewidth}
    \centering
    \includegraphics[width = \textwidth]{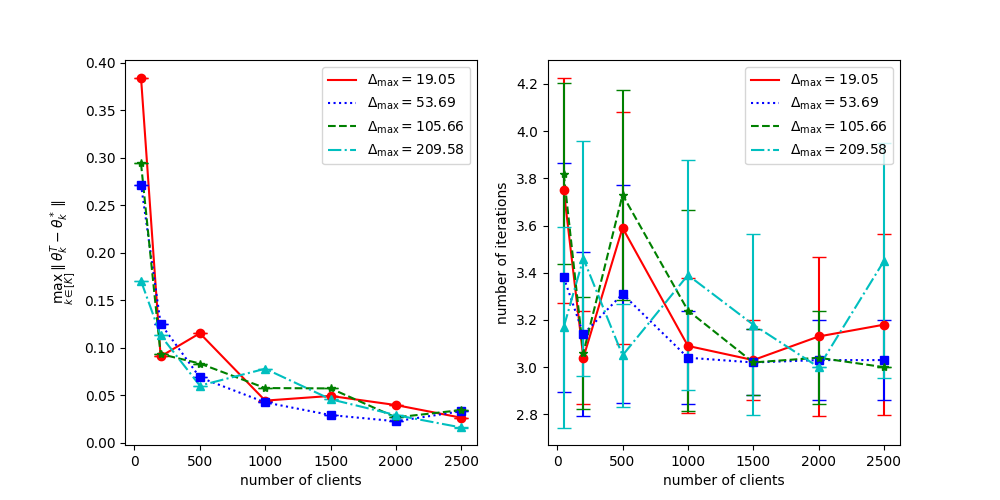}
    \caption{Effect of $\Delta_{\max}$}
    \label{fig:exp_with_max}
  \end{minipage}
\end{figure}

Finally, Figure~\ref{fig:exp_with_max} shows the effect of the maximum separation
$\Delta_{\max}$. Notably,
a larger $\Delta_{\max}$ does not necessarily guarantee a faster
convergence or uniformly lower error. In
fact, in some of the simulations a smaller $\Delta_{\max}$ corresponds to
smaller errors or fewer iterations. This aligns with the bounds derived in
Section~\ref{sec:results} and challenges the commonly held belief in the
literature that greater cluster separation always improves the convergence of
iterative algorithms, even when the number of clusters is small.

\section{Conclusions and future work}\label{sec:conclusion}
This paper provides the first known convergence rates for the EM algorithm under
all regimes of $m$ and $n$ in Federated Learning. The key findings show that
when the data heterogeneity among clients can be described by the FMLR model,
the well-initialized federated EM algorithm can find the true regression
coefficients in only a constant number of iterations.
This paper also provides theoretical and experimental results to challenge the
commonly held belief that greater separation in clusters of data is always
beneficial to the EM algorithm.
We conclude with the discussion of some avenues for future work.
\begin{itemize}
\item \textbf{Parameter dependencies:} While the results presented here show
  relatively weak set of assumptions on parameters like the SNR, it may be
  worth exploring minmax dependencies within the federated learning framework,
  which remains an open question even outside the mixture of linear regression
  modeling setup.
  \item \textbf{Restricted communication:} A common constraint in practical
    cases of federated learning deal with restricting the amount of
    communication between clients and a central server. It would be of interest
    to medical and financial applications to generalize the existing results
    under communication restriction/loss regimes.
  \item \textbf{Generalizing mixture models:} Within both the
    federated and classical learning setups it is of interest to work with more
    general distributions, in particular ones that deal with heavier tails than
    Gaussian densities or have a restricted support.
\end{itemize}

\begin{appendix}

\section{Proofs for Section~\ref{sec:setup}}
\label{sec:proofs_propositions}
In this section, we will prove the two propositions from Section~\ref{sec:setup}.
Recall that we denote $f_{\btheta}(\cdot) $ as the probability density function
of a continuous random variable and $g_{\btheta}(\cdot)$ as the probability mass
function of a discrete random variable with parameter(s) $\btheta$.

\subsection{Proof of Proposition \ref{prop:population_EM}}
\label{pf:prop_empEM}
\begin{proof}
Recall that the joint density of $(\bX_{[n]}, Y_{[n]}, Z)$ can be written as
\begin{align*}
  f_{\btheta}(\bX_{[n]}, Y_{[n]}, Z)
  &= \Prob(Z)f_{\btheta}(\bX_{[n]}, Y_{[n]}|Z)
  \\
  &
  =\frac{1}{K}f(\bX_{[n]})\prod_{i=1}^n\mathcal{N}(\langle \bX_i, \btheta_Z\rangle, \sigma^2)
  \\
  &=\frac{1}{K}f(\bX_{[n]})
    \exp\left\{-\frac{1}{2\sigma^2}
    \sum_{i=1}^n(Y_i-\langle \bX_i, \btheta_Z\rangle)^2
    \right\},
\end{align*}
where we use the fact that $Z \sim \text{Unif}([K])$, Assumption~\ref{as:dgp}
and the linear model for $Y_i$.

Furthermore, by the law of total probability,
\begin{align}
  \label{eq:joint_pdf}
  f_{\btheta}(\bX_{[n]}, Y_{[n]})
  &
    =\frac{f(\bX_{[n]})}{K (2\pi\sigma^2)^{n/2}}
    \sum_{l=1}^K\exp\left\{
    -\frac{1}{2\sigma^2}\sum_{i=1}^n(Y_i-\langle \bX_i, \btheta_k\rangle)^2
    \right\}.
\end{align}
Then, we define the conditional class probability as
\begin{align*}
  w_Z(\btheta) \eqdef
  g_{\btheta}(Z|\bX_{[n]}, Y_{[n]})
  & = \frac{f_{\btheta}(\bX_{[n]}, Y_{[n]}, Z)}{f_{\btheta}(\bX_{[n]}, Y_{[n]})}
  =\frac{
    \exp\{-\frac{1}{2\sigma^2}\sum_{i=1}^n(Y_i-\langle \bX_i, \btheta_Z\rangle)^2\}
    }
    {
    \sum_{l=1}^K\exp\{-\frac{1}{2\sigma^2}\sum_{i=1}^n(Y_i-\langle \bX_i, \btheta_k\rangle)^2\}
    }.
\end{align*}
Recall the definition of $Q$, previously given in~\eqref{eq:pop_Q},
\begin{align*}
  Q(\btheta|\btheta')
  &=
    \int_{\mathcal{X}^n\times\mathcal{Y}^n}
    \left(
    \int_{\mathcal{Z}}g_{\btheta'}(z|\bx_{[n]}, y_{[n]})
    \log f_{\btheta}(\bx_{[n]}, y_{[n]}, z)\diff z
    \right)
    f_{\btheta'}(\bx_{[n]}, y_{[n]})\diff \bx_{[n]}\diff y_{[n]}
  \\
  &=\mathbb{E}_{\bX_{[n]}, Y_{[n]}}
    \left[
    \int_{\mathcal{Z}}g_{\btheta'}(z|\bX_{[n]}, Y_{[n]})
    \log f_{\btheta}(\bX_{[n]}, Y_{[n]}, z)\diff z
    \right]
  \\
  &=\mathbb{E}_{\bX_{[n]}, Y_{[n]}}
    \left[
    \mathbb{E}_{Z \sim g_{\btheta'}(\cdot| \bX_{[n]}, Y_{[n]})}[
    \log f_{\btheta}(\bX_{[n]}, Y_{[n]}, Z)]
    \right].
\end{align*}
Now, plugging in for the density $f_{\btheta}$, as derived
in~\eqref{eq:joint_pdf}, and simplifying,
\begin{align*}
  Q(\btheta|\btheta')
  &
    =
    \mathbb{E}_{\bX_{[n]}, Y_{[n]}}
    \left[
    \sum_{k=1}^Kw_k(\btheta')
    \left( -\frac{1}{2\sigma^2}\sum_{i=1}^n(Y_i-\langle \bX_i,\btheta_k\rangle)^2 \right)
    \right]
\end{align*}
Without loss of generality, we focus on the maximization of $Q$ with respect to the $k$-th vector
$\btheta_k$.
Taking the partial derivative of $Q$ with respect to $\btheta_k$ and seting it
equal to zero:
\begin{align*}
    -\mathbb{E}_{\bX_{[n]}, Y_{[n]}}
    \left[w_k(\btheta')\sum_{i=1}^n\bX_i\bX_i^T\btheta_k\right]
    +
  \mathbb{E}_{\bX_{[n]}, Y_{[n]}}
    \left[w_k(\btheta')\sum_{i=1}^nY_i\bX_i\right],
    = 0
\end{align*}
We can solve for the one-step update for the $k$\textit{-th} vector to be
\begin{align*}
  \btheta^+_k
  =
  \mathbb{E}_{\bX_{[n]}, Y_{[n]}}
    \left[w_k(\btheta')\sum_{i=1}^n\bX_i\bX_i^T\right]^{-1}
  \mathbb{E}_{\bX_{[n]}, Y_{[n]}}
  \left[w_k(\btheta')\sum_{i=1}^nY_i\bX_i\right].
\end{align*}
\end{proof}
\subsection{Proof of Proposition \ref{prop:empirical_EM}}
\begin{proof}
  The proof of this proposition then follows directly by
  taking limits in Proposition~\ref{prop:population_EM}
  Since the only difference between the derivation of the empirical EM iterates and
  the population EM iterates is that the sample averages with respecvt to $m$
  are replaced with respective expectations (see~\eqref{eq:qm} and~\eqref{eq:pop_Q}).
\end{proof}

\section{Proofs for Section~\ref{sec:results}}
\label{appendix:proofs_theorems}
In this section we prove the two theorems presented in Section~\ref{sec:results}.
Throughout this section, when the subscript of the expectation is omitted,
$\mathbb{E}[\cdot]$ denotes the expectation with respect to the joint density of
$(\bX_{[n]}, Y_{[n]})$.
\subsection{Proof of Theorem~\ref{thm:pop_cons}}\label{proof_of_theorem_5}
\begin{proof}
We perform a one-step analysis.
Suppose at the current step, we have estimates
$\{\btheta_k\}_{k=1}^K$, and one iteration of population EM generates new estimates
$\{\btheta^+_k\}_{k=1}^K$. Without loss of generality, we focus on $\btheta_1^+$.
The same steps can be repeated for any of the $K$ vectors.
Pluggin in for $\btheta_1^+$ as defined by Proposition~\ref{prop:population_EM}, we have
\begin{align}
    \label{eq:diff}
  \btheta_1^+-\btheta_1^*
  &=
    \mathbb{E}\left[w_1(\btheta)\sum_{i=1}^n\bX_i\bX_i^T\right]^{-1}
    \mathbb{E}\left[w_1(\btheta)\sum_{i=1}^nY_i\bX_i\right]-\btheta_1^*
    \nonumber
  \\
  &=
    \mathbb{E}\left[w_1(\btheta)\sum_{i=1}^n\bX_i\bX_i^T\right]^{-1}
    \mathbb{E}\left[w_1(\btheta)\sum_{i=1}^n\bX_i(Y_i-\bX_i^T\btheta_1^*)\right].
\end{align}
We now observe that, by definition,
\begin{align*}
    \mathbb{E}\left[w_1(\btheta)\sum_{i=1}^n\bX_i(Y_i-\bX_i^T\btheta_1^*)\right]
  &
    = \mathbb{E}_{1}\left[\sum_{i=1}^n\bX_i\varepsilon_i^1\right]
    = 0.
\end{align*}
Therefore, we can reduce \eqref{eq:diff} to
\begin{align}
  \label{eq:population_iterate}
  \btheta_1^+-\btheta_1^*
  ={\underbrace{
  \mathbb{E}\left[w_1(\btheta)\sum_{i=1}^n\bX_i\bX_i^T\right]}_{A}}^{-1}
  \underbrace{\mathbb{E}\left[(w_1(\btheta)-w_1(\btheta^*))
  \sum_{i=1}^n\bX_i(Y_i-\langle \bX_i, \btheta_1^*\rangle)\right]}_{B} .
\end{align}
Note here that we do not include the inverse in the definition of $A$.
We will now bound each term separately,
starting with the numerator $B$.
\subsection*{Bounding $B$:}
\begin{align*}
  K\|B\|
    =
    K\sup_{s\in\mathcal{S}^{d-1}}
    \left|\mathbb{E}\left[(w_1(\btheta)-w_1(\btheta^*))
    \sum_{i=1}^n(Y_i-\bX_i^T \btheta_1^*)
    \bX_i^Ts\right]\right|
    \le
    |T_1|
    +\sum_{k\neq 1} |T_k|
\end{align*}
where
\begin{align*}
  T_1 &=
        \mathbb{E}_{1}\left[(w_1(\btheta)-w_1(\btheta^*))
        \sum_{i=1}^n(Y_i-\bX_i^T \btheta_1^*)
        \bX_i^Ts\right]
  \\
  T_k&=
       \mathbb{E}_{k}\left[(w_1(\btheta)-w_1(\btheta^*))
    \sum_{i=1}^n(Y_i-\bX_i^T \btheta_1^*)
    \bX_i^Ts\right].
\end{align*}
% Note that, by definition, $|w_1(\btheta) - w_1(\btheta^*)| \leq 1$ for all $\btheta, \btheta^*$.
We start by bounding $T_k$, $\forall k\ne 1$.
\begin{align}
  \label{eq:t_k}
    T_k
  &= |\mathbb{E}_{k}
    [(w_1(\btheta) - w_1(\btheta^*))
    \sum_{i=1}^n(\varepsilon_i+\bX_i^T(\btheta_k^*-\btheta_1^*))\bX_i^T s]|
    \nonumber
  \\
  &\le\underbrace{
    |\mathbb{E}_{k}
    [(w_1(\btheta) - w_1(\btheta^*))
    \sum_{i=1}^n\bX_i^T(\btheta_k^*-\btheta_1^*)\bX_i^T s]|
    }_{T_{k1}}
    +\underbrace{
    |\mathbb{E}_{k}
    [(w_1(\btheta) - w_1(\btheta^*))\sum_{i=1}^n\varepsilon_i\bX_i^T s]|
    }_{T_{k2}}
\end{align}

\subsubsection*{Probability bounds}
In order to bound both terms in the above inequality, we need to define the
following events for any $k\neq 1$:
\begin{align}
  \label{eq:prob_bounds}
  G_{k,1}
  &= \left\{
    \sum_{i=1}^n(\bX_i^T(\btheta_k^*-\btheta_1^*))^2\ge\frac{320\sigma^2n}{3}
    \right\},
    \qquad
     G_3
  = \left\{\sum_{i=1}^n\varepsilon_i^2\le 2\sigma^2n\right\},
  \\
  \nonumber
  G_{k,2}
  &= \left\{
    \max\left\{\sum_{i=1}^n(\bX_i^T(\btheta_k-\btheta_k^*))^2,
    \sum_{i=1}^n(\bX_i^T(\btheta_1-\btheta_1^*))^2\right\}
    \le\frac{1}{16}\sum_{i=1}^n( \bX_i^T(\btheta_k^*-\btheta_1^*))^2\right\}.
\end{align}
We will show that these are
high-probability events that control the magnitude of $T_k$.
We first show that the complements of each of these events have small probabilities.
Starting with $G_{k,1}^c$
\begin{align*}
  \mathbb{P}(G_{k,1}^c)
  = \mathbb{P}
  \left(\sum_{i=1}^n(\bX_i^T(\btheta_k^*-\btheta_1^*))^2\le\frac{320\sigma^2n}{3}\right)
  =
  \mathbb{P}
  \left(\sum_{i=1}^n
  \frac{(\bX_i^T(\btheta_k^*-\btheta_1^*))^2}{\|\btheta_k^*-\btheta_1^*\|^2}
  \le\frac{320\sigma^2n}{3\|\btheta_k^*-\btheta_1^*\|^2}
  \right).
\end{align*}
Note that by Assumption~\ref{as:dgp},
$\frac{(\bX_i^T(\btheta_k^*-\btheta_1^*))^2}{\|\btheta_k^*-\btheta_1^*\|^2} \sim \chi^2_1$.
Then by tail bounds for $\chi^2$ random variables (see
\cite[Corollary of Lemma 1]{10.1214/aos/1015957395}),
with
$s = n\left(\frac{1}{2}-\frac{160\sigma^2}{3\Delta_{\min}^2}\right)^2$,
\begin{align*}
  \mathbb{P}(G_{k,1}^c)
  \leq
  \exp\left(-n\left(\frac{1}{2}-\frac{160\sigma^2}{3\Delta_{\min}^2}\right)^2\right)
  \leq
  \exp(-\frac{n}{K^2}),
\end{align*}
by the assumption placed on the signal-to-noise ratio.
Now, for $G_{k, 2}^c$,
\begin{align*}
  &
  \mathbb{P}(G_{k,2}^c)
    \leq
    \mathbb{P}\left(
    \sum_{i=1}^n(\bX_i^T(\btheta_k-\btheta_k^*))^2
    \geq
    \frac{1}{16}\sum_{i=1}^n(\bX_i^T(\btheta_k^*-\btheta_1^*))^2
    \right)
  \\
  &
    \qquad \qquad \qquad \qquad \qquad \qquad
    +\mathbb{P}
    \left(\sum_{i=1}^n( \bX_i^T(\btheta_1-\btheta_1^*))^2
    \geq
    \frac{1}{16}\sum_{i=1}^n(\bX_i^T(\btheta_k^*-\btheta_1^*))^2
    \right).
\end{align*}
Note that $\forall t>0$, the first term is bounded as
{\small
\begin{align}
  \label{eq:g_k2c}
  &
  \mathbb{P}
  \left(
    \sum_{i=1}^n(\bX_i^T(\btheta_k-\btheta_k^*))^2
    \geq
    \frac{1}{16}\sum_{i=1}^n(\bX_i^T(\btheta_k^*-\btheta_1^*))^2
    \right)
  \\
  &
    \leq
    \mathbb{P}
    \left(
    \sum_{i=1}^n
    \frac{\sum_{i=1}^n(\bX_i^T(\btheta_k-\btheta_k^*))^2}
    {\|\btheta_k-\btheta_k^*\|^2}
    \geq
    \frac{t}{\|\btheta_k-\btheta_k^*\|^2}
    \right)
    +\mathbb{P}
    \left(
    \frac{1}{16}\sum_{i=1}^n
    \frac{(\bX_i^T(\btheta_k^*-\btheta_1^*))^2}
    {\|\btheta_k^*-\btheta_1^*\|^2}
    \leq
    \frac{t}{\|\btheta_k^*-\btheta_1^*\|^2}
    \right).
    \nonumber
\end{align}
}
Once again the $\chi^2$ tail bounds from \cite[Corollary of Lemma 1]{10.1214/aos/1015957395} can be applied by
choise of
\begin{align*}
  s
  =
  \frac{t}{2\|\btheta_k-\btheta_k^*\|^2}
  -
  \frac{\sqrt{n} }{2}
  \sqrt{\frac{2t}{\|\btheta_k-\btheta_k^*\|^2} -n}
\end{align*}
for the first probability bound and
\begin{align*}
  s = \frac{n}{4} - \frac{8t}{\|\btheta_k^* - \btheta_1^*\|^2} +\frac{64t^2}{n\|\btheta_k^* - \btheta_1^*\|^4}
\end{align*}
for the second probability bound.
In order for the bounds to be non-trivial, we need
$t>n\|\btheta_k-\btheta_k^*\|^2$
and
$t<\frac{1}{16}n\|\btheta_k^*-\btheta_1^*\|^2$ to hold simultaneously.
By Assumption~\ref{as:identifiability}, both conditions on $t$ can be satisfied
by simply choosing
$t = \frac{1}{2}n(\|\btheta_k-\btheta_k^*\|^2+\frac{1}{16}\|\btheta_k^*-\btheta_1^*\|^2)$.
Thus,
\begin{align*}
  \eqref{eq:g_k2c}
  &
    \leq
    \exp(-\frac{n(1-4\alpha)^2}{64 \alpha^2})
    +
    \exp(-\frac{n(1-16 \alpha^2)^2}{16})
    \leq 2\exp(-C_{\alpha}n),
\end{align*}
where
\begin{align}
  \label{eq:c_alpha}
C_\alpha = \frac{(1-4\alpha)^2}{64 \alpha^2}.
\end{align}
Finally, for $G_3^c$, we again employ \cite[Corollary of Lemma 1]{10.1214/aos/1015957395} to obtain
\begin{align*}
  \mathbb{P}(G_3^c)
  &=
    \mathbb{P}
    \left(
    \sum_{i=1}^n
    \varepsilon_i^2 \geq 2n\sigma^2
    \right)
    \leq
    \exp(-n).
\end{align*}
Now, let $G_k = G_{k,1}\cap G_{k,2}\cap G_3$ be the intersection of the three events.
And thus,
\begin{align*}
  \Prob(G_k)
  = 1 - \Prob(G_{k,1}^c) - \Prob(G_{k,2}^c) - \Prob(G_{k,3}^c)
  \geq 1-2\exp(-n) - \exp(-\frac{n}{K^2}).
\end{align*}
We will use this to partition our computation of expectations into different
regions and bound each term separately next.
\subsubsection*{Expectations}
Recall $T_{k,1}$ and $T_{k,2}$ as defined in~\eqref{eq:t_k}.
We partition each of these terms by the $\{G_{k, l}\}_{l=1}^3$ sets as defined earlier.
To avoid repetition, we will only show the bounding argument for $T_{k,1}$,
the same methodology applies for $T_{k,2}$ and results in a bound of the same order.
\begin{align}
  T_{k,1}
  &\le
    \mathbb{E}_{k}
    \left[
    |(w_1(\btheta) - w_1(\btheta^*))
    \sum_{i=1}^n\bX_i^T(\btheta_k^*-\btheta_1^*)\bX_i^T s
    \mid \mathds{1}_{G_k}\right]
    \label{eq:tk_gk}
  \\
  &
    +\mathbb{E}_{k}\left[|(w_1(\btheta) - w_1(\btheta^*))
    \sum_{i=1}^n\bX_i^T(\btheta_k^*-\btheta_1^*)\bX_i^T s\mid
    \mathds{1}_{G_{k,1}^c}\right]
    \label{eq:tk_gk1c}
  \\
  &
    +\mathbb{E}_{k}\left[|(w_1(\btheta) - w_1(\btheta^*))
    \sum_{i=1}^n\bX_i^T(\btheta_k^*-\btheta_1^*)\bX_i^T s\mid
    \mathds{1}_{G_{k,2}^c}\right]
    \label{eq:tk_gk2c}
  \\
  &
    +\mathbb{E}_{k}\left[|(w_1(\btheta) - w_1(\btheta^*))
    \sum_{i=1}^n\bX_i^T(\btheta_k^*-\btheta_1^*)\bX_i^T s\mid
    \mathds{1}_{G_{3}^c}\right].
    \label{eq:tk_gk3c}
\end{align}
Starting with the weights in~\eqref{eq:tk_gk}
\begin{align}
  \label{bound_on_w_1_on_G_k}
  w_1(\btheta)
  &
    \le \exp\left(
    \frac{1}{2\sigma^2}\sum_{i=1}^n(Y_i-\bX_i^T\btheta_k)^2
    -\frac{1}{2\sigma^2}\sum_{i=1}^n(Y_i-\bX_i^T\btheta_1)^2
    \right)\nonumber
  \\
  &=
    {\small
    \exp\left(
    \frac{1}{2\sigma^2}
    \sum_{i=1}^n(\varepsilon_i+\bX_i^T(\btheta_k^*-\btheta_k))^2
    - \frac{1}{2\sigma^2}
    \sum_{i=1}^n(\varepsilon_i
    +\bX_i^T(\btheta_k^*-\btheta_1^*)
    -\bX_i^T(\btheta_1-\btheta_1^*))^2
    \right)\nonumber
}
  \\
  &
    \leq
    \exp\left(
    \frac{3}{2\sigma^2}\sum_{i=1}^n\varepsilon_i^2
     - \frac{3}{64\sigma^2}\sum_{i=1}^n(\bX_i^T( \btheta_k^*-\btheta_1^*))^2\right)
\end{align}
The last inequality in~\eqref{bound_on_w_1_on_G_k} follows from applying $(a+b)^2 \leq 2a^2 + 2b^2$ and
observing that
\begin{align*}
    \sum_{i=1}^n(\varepsilon_i
    +\bX_i^T(\btheta_k^*-\btheta_1^*)
    -\bX_i^T(\btheta_1-\btheta_1^*))^2
  &
    \geq\sum_{i=1}^n\frac{1}{2}(
    \bX_i^T(\btheta_k^*-\btheta_1^*) -
    \bX_i^T(\btheta_1-\btheta_1^*))^2
    -\varepsilon_i^2
  \\
  & \ge\frac{7}{32}\sum_{i=1}^n
    (\bX_i^T(\btheta_k^*-\btheta_1^*))^2
    -\sum_{i=1}^n\varepsilon_i^2.
\end{align*}
Then, by definition of $G_k$, we see that \eqref{bound_on_w_1_on_G_k} is
ultimately bounded from above by $\exp(-2n)$.
The same exercise can be repeated for $w_1(\btheta^*)$ to get an identical bound,
which is crude, but sufficient for our purposes.
Therefore, $|w_1(\btheta)|+|w_1(\btheta^*)|\leq \exp(-n)$.
Using this, we can see that
\begin{align*}
  \eqref{eq:tk_gk}
  &
  \leq
  e^{-n}
  \mathbb{E}\left[
  \sum_{i=1}^n(\bX_i^T(\btheta_k^*-\btheta_1^*)) ^2
  \sum_{i=1}^n(\bX_i^T s)^2
  \right]^{1/2}
  \\
  &
  \leq
    e^{-n}
    \left( n\|\btheta_k^*-\btheta_1^*\|^2
    +n(n-1)\|\btheta_k^*-\btheta_1^*\|^2 \right)^{1/2}
    =O(\Delta_{\max}n e^{-n}),
\end{align*}
where the first inequality follows by the bounds on $w_1(\btheta),
w_1(\btheta^*)$ and the Cauchy-Schwarz inequality and the second
inequality follows by \cite[Lemma 7]{supp_balakrishnan}.

Now, we turn to the remaining terms of $T_{k1}$ (\eqref{eq:tk_gk1c}
-\eqref{eq:tk_gk3c}).
\begin{align*}
  \eqref{eq:tk_gk1c}
  &
  \leq \sqrt{\mathbb{E}_{k}
    \left[\sum_{i=1}^n (\bX_i^T(\btheta_k^*-\btheta_1^*))^2|G_{k,1}^c \right]}
    \sqrt{\mathbb{E}_{k}
    \left[\sum_{i=1}^n(\bX_i^T s)^2|G_{k, 1}^c
    \right]}
  \mathbb{P}(G_{k, 1}^c)
  \\
  &
  \leq O(\Delta_{\max}\sqrt{n}\exp(-n/K^2)),
\end{align*}
where the last inequality follows from Lemma~\ref{lemma.1}.
Next,
\begin{align*}
  \eqref{eq:tk_gk2c}
    &\le\sqrt{\mathbb{E}_{k}
      \left[\sum_{i=1}^n(\bX_i^T(\btheta_k-\btheta_k^*))^2|G_{k,2}^c\right]
      +\mathbb{E}_{k}
      \left[\sum_{i=1}^n(\bX_i^T(\btheta_1-\btheta_1^*))^2|G_{k,2}^c\right]}
      \mathbb{P}(G_{k, 2}^c)
  \\
    &\leq
      O(\alpha n \Delta_{\min}\exp(-C_{\alpha}n)),
\end{align*}
where the last line follows from Lemma~\ref{lemma.2} and $C_{\alpha}$is defined in~\eqref{eq:c_alpha}.
Finally,
\begin{align*}
  \eqref{eq:tk_gk3c}
  \leq
    \sqrt{
    \mathbb{E}_{k}
    \left[\sum_{i=1}^n(\bX_i^T( \btheta_k^*-\btheta_1^*))^2\right]}
    \sqrt{\mathbb{E}_{k}
    \left[\sum_{i=1}^n(\bX_i^T s)^2\right]}
    \mathbb{P}(G_3^c)
    \leq O(\Delta_{\max}n\exp(-n))
\end{align*}
follows from $\{\bX_i\}_{i=1}^n$ being independent of the event $G_3^c$.
Therefore, putting all terms together,
\begin{align*}
  T_{k, 1}
  \leq
  O
  \left(
  \Delta_{\max}n\exp(-n)
  +
  \alpha n \Delta_{\min}\exp(-C_{\alpha}n)
  +\Delta_{\max}\sqrt{n}\exp(-n/K^2)
  \right).
\end{align*}
Similarly analysis yields the following bound for $T_{k, 2}$:
\begin{align*}
  T_{k, 2}
  \leq
  O\left(
  (\sigma+\Delta_{\max})n\exp(-n)
  +
  \alpha n \Delta_{\min}\exp(-C_{\alpha}n)
  \right).
\end{align*}

The final term for bounding $B$ is $T_1$, which can be treated similar to $T_k$.
First, applying Cauchy-Schwarz,
\begin{align}
  \label{eq:t1_cs}
    T_1
    \leq
    \mathbb{E}_{1}
    [(w_1(\btheta) - w_1(\btheta^*))^2]^{1/2}
    \mathbb{E}_{1}
    [(\sum_{i=1}^n\varepsilon_i \bX_i^Ts)^2]^{1/2}.
\end{align}
It is straightforward to see that the second expectation in~\eqref{eq:t1_cs} is
equal to $n \sigma^2$.
Now, for evaluating the first expectation, we repeat the partitioning and
conditioning exercise done for the $T_{k,1}$ term.
We will use $G_1, G_2, G_3$ to
denote the three event sets.
\begin{align*}
  G_1
  &= \big\{
    \sum_{i=1}^n(\bX_i,^T (\btheta_k^*-\btheta_1^*))^2
    \geq
    \frac{320\sigma^2n}{3},\ \forall k\ne 1
    \big\},
    \\
  G_2
  &= \{
    \sum_{i=1}^n(\bX_i^T(\btheta_1-\btheta_1^*)) ^2
    \leq
    \frac{1}{16}\sum_{i=1}^n(\bX_i^T(\btheta_k^*-\btheta_1^*))^2, \ \forall k\ne 1
    \},
\end{align*}
and $G = G_1\cap G_2\cap G_3$ ($G_3$ was defined earlier in the probability
bounds for $T_k$).
Now, using the observation that
$G_1 = \cap_{k\ne 1}G_{k, 1}$, and \\
$G_2 =
\cap_{k\ne 1}
\left\{\sum_{i=1}^n(\bX_i^T(\btheta_1-\btheta_1^*))^2
  \leq
  \frac{1}{16}\sum_{i=1}^n(\bX_i^T(\btheta_k^*-\btheta_1^*))^2
\right\}$.
We can directly use the previous calculations for
$G_{k, 1}^c$ and $G_{k, 2}^c$
to show exponential concentration of $G_1$ and $G_2$.
That is,
$ \mathbb{P}(G_1^c) \leq \sum_{k\ne 1}\mathbb{P}(G_{k, 1}^c) \leq K\exp(-n/K^2) $
and
\begin{align*}
  \mathbb{P}(G_2^c)
  &
    \leq
    \sum_{k\ne 1}\mathbb{P}
    \left(\sum_{i=1}^n(\bX_i^T(\btheta_1-\btheta_1^*))^2
    \geq
    \frac{1}{16}\sum_{i=1}^n(\bX_i^T(\btheta_k^*-\btheta_1^*))^2
    \right)
    \leq
      2K\exp(-C_{\alpha}n).
\end{align*}
Therefore,
$ \mathbb{P}(G^c) \leq K\exp(-n/K^2)+\exp(-n)+2K\exp(-C_{\alpha}n).$
Next, note that
\begin{align*}
  \mathbb{E}_{1}
  \left[(w_1(\btheta) - w_1(\btheta^*))^2\right]^{1/2}
  \leq
  \mathbb{E}_{1}[(w_1(\btheta) - w_1(\btheta^*))^2|G]+\mathbb{P}(G^c).
\end{align*}
Observe that
$ w_1(\btheta) = 1-\sum_{k\ne 1}w_k(\btheta) \geq 1 - (K-1) \exp(-n)$
on the event $G$.
Similarly, $w_1(\btheta^*)\geq 1-(K-1)\exp(-n)$.
This directly gives the bound
$\mathbb{E}[(w_1(\btheta) - w_1(\btheta^*))^2]\lesssim K^2\exp(-n)$ on the event $G$.
Thus,
\begin{align*}
  T_1 = O(\sqrt{n}\sigma K (\exp(-n) + \exp(-n/K^2) + \exp(-C_\alpha n))).
\end{align*}
Putting all the terms together, we can bound $B$
\begin{align*}
  \|B\|
   & \lesssim
     \frac{n(\sigma+\Delta_{\max})}{K}e^{-n}
    +\frac{\alpha n^{3/2} \Delta_{\min}\sigma}{K}e^{-C_\alpha n}
    +\frac{\sqrt{n}\Delta_{\max}}{K}e^{-n/K^2}
     +K\sigma\sqrt{n}e^{-n}.
\end{align*}
\subsection*{Bound on $A$}\label{population.2}
Recall we define $A$ as
\begin{align*}
    A
  &=
    \frac{1}{K}\sum_{k=1}^K
    \mathbb{E}_{k}
    [w_1(\btheta)\sum_{i=1}^n\bX_i\bX_i^T]
    \geq
    \frac{1}{K}
    \mathbb{E}_{1}
    [
    w_1(\btheta)\sum_{i=1}^n\bX_i\bX_i^T
    ].
\end{align*}
Then,
\begin{align*}
  \|\mathbb{E}_{1} [w_1(\btheta)\sum_{i=1}^n\bX_i\bX_i^T]\|
  \geq
  \|\mathbb{E}_{1} [(1-(K-1)\exp(-n))\sum_{i=1}^n\bX_i\bX_i^T]\|
  =
  n(1-(K-1)e^{-n})
\end{align*}
Thus,
\begin{align}
  \label{eq:A_bound}
  \|A\|^{-1}
  &
    \leq
    \frac{K}{n(1-(K-1)e^{-n})}.
\end{align}
We can bring the bounds on $A$ and $B$ together to see that
\begin{align*}
  \|\btheta_1^+-\btheta_1^*\|
  &
  \le\|A\|^{-1}\|B\|
  \\
  &\lesssim
    (\sigma+\Delta_{\max}+\frac{K^2\sigma}{\sqrt{n}})\frac{e^{-n}}{1-(K-1)e^{-n}}
    +(\alpha \Delta_{\min}\sqrt{n} + \frac{K^2}{\sqrt{n}})\frac{\sigma e^{-C_\alpha n}}{1-(K-1)e^{-n}}
  \\
  &\qquad \qquad\qquad
    +(\Delta_{\max} + K^2\sigma)\frac{e^{-n/K^2}}{\sqrt{n}(1-(K-1)e^{-n})}.
\end{align*}
This rate can be simplified based on any additional assumptions one is willing to make on
$\alpha, K, \sigma, \Delta_{\max}$ and $\Delta_{\min}$.
In particular, we note that this bound allows for $K$ to increase with $n$ at a
rate of $o(n)$. This bound also allows for $\Delta_{\min}$ and $\Delta_{\max}$ to
evolve with $n$ up to exponential order.
\end{proof}

\subsection{Proof of Theorem~\ref{thm:emp_consistency}}
\label{proof:empirical_thm}
\begin{proof}
Similar to the proof of Theorem~\ref{thm:pop_cons}, we perform a one-step analysis,
focusing on $k=1$, without loss of generality.
To simplify some of the notation we will use $\E_n$ and $\E_m$to denote the empirical
expectation with over $n$ and $m$, respectively.
\begin{align}
  \label{eq:empirical_decomp}
  \widehat\btheta_1^+-\btheta_1^*
  &=
    {\underbrace{
    \E_m[w_1(\widehat\btheta)\E_n[\bX_i^j\bX_i^{jT}]]}_{\hat{A}}}^{-1}
    \underbrace{\E_m[w_1(\widehat\btheta)\E_n[\bX_i^j(Y_i^j- \bX_i^{jT}\btheta_1^*)]]}_{\hat{B}}.
\end{align}
Note here that we do not include the inverse in the definition of $\hat{A}$.
\subsection*{Bounding $\hat{B}$}
To leverage the results of Theorem~\ref{thm:pop_cons},
we add and subtract $B$~\eqref{eq:population_iterate} to $\hat{B}$,
i.e.\ $\hat{B} = (\hat{B}-B)+B$.
Since $B$ was bounded in Theorem~\ref{thm:pop_cons}, we only need to study here
$\hat{B} - B$.
The final bound will be obtained by combining the two parts as
$\|\hat{B}\|\le \|\hat{B}-B\|+\|B\|$.
Note that we can start by defininge $\hat{B}$ as
\begin{align}
  \label{eq:Bhat_partition}
  &\E_m[w_1(\widehat\btheta)\E_n[\bX_i^j(Y_i^j- \bX_i^{jT}\btheta_1^*)]]
    \nonumber
  \\
  &
  = \sum_{k=1}^K\E_{m,k}[w_1(\widehat\btheta)\E_n[\bX_i^j(Y_i^j- \bX_i^{jT}\btheta_1^*)]]
    \nonumber
  \\
  &=
    \E_{m,1}[w_1^j(\widehat\btheta) \E_{n}[\bX_i^j\varepsilon_i^j]]
    +
    \sum_{k=2}^K\E_{m,k}[w_1^j(\widehat\btheta) \E_n[\bX_i^j(\bX_i^{jT}(\btheta_k^* - \btheta_1^*)+ \varepsilon_i^j)]],
\end{align}
where $E_{m,k}$ corresponds to the $m$-\textit{th} client having data generated
from the $k$-\textit{th} mixture element.
Note that to study the each term in~\eqref{eq:Bhat_partition},
we would like to apply Lemma~\ref{lemma.8} to bound the deviation of the empirical mean
from the population mean (the corresponding term in $B$) with high probability ($1-\delta$).
But, in order to do so, we need to first show that each term is
sub-exponential with a finite sub-exponential norm.
Recall that for a random variable $W$, its sub-exponential norm is defined as
\begin{align*}
  \|W\|_{\psi_1} = \inf \{ k > 0: \E[\exp(|W|/k)] \leq 2\}.
\end{align*}
Starting with the first term, we compute that it's sub-exponential norm is given by
\begin{align}
  \label{eq:subexp_t1}
  \|w_1^j(\widehat\btheta) \E_{n}[\bX_i^j\varepsilon_i^j]\|_{\psi_1} = O(\frac{\sigma}{\sqrt{n}}),
\end{align}
The probability of the data being generated by the first mixture element is
$1/K$ by definition.
Thus, Lemma~\ref{lemma.8} applies with the parameters $p=1/K$ and
sub-exponential norm given by~\eqref{eq:subexp_t1} for
\begin{align*}
  t=O\left(\frac{\sigma}{\sqrt{n}}\sqrt{\frac{d\log(dK^2/\delta)}{m}}
  \sqrt{\frac{1}{K} \vee \frac{\log(dK^2/\delta)}{m}}
  \right).
\end{align*}

In order to simplify the computation for the second term
of~\eqref{eq:Bhat_partition}, we partition the inner
sample expectation (w.r.t.\ $n$) based on the events $G_k$,
$\{G_{k,l}\}_{l=1}^3$.
\\
That is,
{\footnotesize
\begin{align*}
  \E_{n,k}[\bX_i^j(\bX_i^{jT}(\btheta_k^* - \btheta_1^*)+ \varepsilon_i^j)]
  &=
    \underbrace{\E_{n,k}[\bX_i^j(\bX_i^{jT}(\btheta_k^* - \btheta_1^*)+ \varepsilon_i^j)|G_k]}_{(I)}
    + \underbrace{\E_{n,k}[\bX_i^j(\bX_i^{jT}(\btheta_k^* - \btheta_1^*)+ \varepsilon_i^j)|G_{k,1}^c]}_{(II)}
  \\
  &
    + \underbrace{\E_{n,k}[\bX_i^j(\bX_i^{jT}(\btheta_k^* - \btheta_1^*)+ \varepsilon_i^j)|G_{k,2}^c]}_{(III)}
    + \underbrace{\E_{n,k}[\bX_i^j(\bX_i^{jT}(\btheta_k^* - \btheta_1^*)+ \varepsilon_i^j)|G_{k,3}^c]}_{(IV)}
\end{align*}
}
where
\begin{align}
  \label{eq:prob_bounds_empirical}
  G_{k,1}
  &= \left\{
    \sum_{i=1}^n(\bX_i^T(\btheta_k^*-\btheta_1^*))^2\ge\frac{320\sigma^2n}{3}
    \right\},
    \qquad
    G_3
    = \left\{\sum_{i=1}^n\varepsilon_i^2\le 2\sigma^2n\right\}.
  \\
  \nonumber
  G_{k,2}
  &= \left\{
    \max\left\{\sum_{i=1}^n(\bX_i^T(\widehat\btheta_k-\btheta_k^*))^2,
    \sum_{i=1}^n(\bX_i^T(\widehat\btheta_1-\btheta_1^*))^2\right\}
    \le\frac{1}{16}\sum_{i=1}^n( \bX_i^T(\btheta_k^*-\btheta_1^*))^2\right\},
\end{align}
and $G_k = G_{k, 1}\cap G_{k,2}\cap G_3$.
Note that these events are identical to the events defined
in~\eqref{eq:prob_bounds} with the population iterate replaced by the empirical iterate.
We now show finite sub-exponential norms for each of the 4 terms above.
\subsubsection*{Analysis of $(I)$:}
\begin{align}
  \label{eq:I}
  w_1^j(\widehat\btheta)(I)
  = w_1^j(\widehat\btheta)\E_{n, k}[\bX_i^j\bX_i^{jT}(\btheta_k^* - \btheta_1^*)|G_k]
  + w_1^j(\widehat\btheta)\E_{n, k}[\bX_i^{jT}\varepsilon_i^j|G_k]
\end{align}
Note that $\Prob(Z=k|G_k) \leq \Prob(Z=k) = \frac{1}{K}$.
Thus, Lemma~\ref{lemma.8} holds for the second term of~\eqref{eq:I} with $p = 1/K$,
sub-exponential norm $O(\frac{\sigma}{\sqrt{n}}\exp(-n)),$
and
\begin{align*}
  t=O\left(\sigma\exp(-n)\sqrt{\frac{d\log(dK^2/\delta)}{nm}}\sqrt{\frac{1}{K} \vee \frac{\log(dK^2/\delta)}{m}}\right).
\end{align*}
For the first term of~\eqref{eq:I},
note that bounding the sub-exponential norm is equivalent to bounding the
sub-exponential norm of the inner product of the element with $s \in S^{d-1}$.
That is,
\begin{align}
  &
    \|w_1^j(\widehat\btheta)
    \E_{n, k}[\bX_i^j\bX_i^{jT}(\btheta_k^* - \btheta_1^*)|G_k] \|_{\psi_1}
    \stackrel{(\rm i)}{\leq}
    e^{-2n}
    \sup_{q\ge 1}\frac{1}{q}
    \E[|\E_{n,k}[(\bX_i^{jT}s)\bX_i^{jT}(\btheta_k^* - \btheta_1^*)|^q|G_k]]^{\frac{1}{q}}
    \nonumber
  \\
  &
    \stackrel{(\rm ii)}{\leq}
    e^{-2n}
    \sup_{q\ge 1}\frac{1}{q}
    \sqrt{\E[\E_{n,k}[(\bX_i^{jT}s)^2|G_k]^{q}]^{1/q}}
    \sqrt{\mathbb{E}[\E_{n,k}[(\bX_i^{jT}(\btheta_k^* - \btheta_1^*))^2|G_k]^{q}]^{1/q}}
    \nonumber
  \\
  &\stackrel{(\rm iii)}{\leq}
    O(\Delta_{\max}n^{-1/2} e^{-2n})
  \label{eq:subEbound}
\end{align}
Inequality $(\rm i)$ follows from the fact that on event $G_k$,
$w_1^j(\widehat\btheta)\leq\exp(-2n)$ (see \eqref{bound_on_w_1_on_G_k}).
$(\rm ii)$
follows from applying Cauchy-Schwarz twice. $(\rm iii)$ follows from the fact that all
$\bX_i^j$ are independent of the event $\{Z = k\}$
and $\E_{n,k}[(\bX_i^{jT} s)^2]$ is independent of $G_k$,
$\mathbb{P}(G_k)> \frac{1}{2}$ for $n$ large enough (see analysis of $G_k$ in
the proof of Theorem~\ref{thm:pop_cons}) and the fact that
$\E_{n,k}[(\bX_i^{jT} s)^2]\sim\text{SubE}(4n, 4)$ and
$\E_{n,k}[(\bX_i^{jT}(\btheta_k^*-\btheta_1^*))^2|G_k]\sim
\text{SubE}(4n\|\btheta_k^*-\btheta_1^*\|^4, 4\|\btheta_k^*-\btheta_1^*\|^2)$.
Thus Lemma~\ref{lemma.8} applies to~\eqref{eq:I}
for
\begin{align*}
  t &=O\Bigg(\frac{\Delta_{\max}e^{-2n}+\sigma e^{-n}}{\sqrt{n}}
      \sqrt{\frac{d\log(dK^2/\delta)}{m}}
      \sqrt{\frac{1}{K} \vee \frac{\log(dK^2/\delta)}{m}}
      \Bigg).
\end{align*}

\subsubsection*{Analysis of $(II)$}
\begin{align}
  \label{eq:II}
  w_1^j(\widehat\btheta)(II)
  = w_1^j(\widehat\btheta)\E_{n,k}[\bX_i^j(\bX_i^{jT}(\btheta_k^* - \btheta_1^*)|G_{k,1}^c]
  + w_1^j(\widehat\btheta)\E_{n, k}[\bX_i^{jT}\varepsilon_i^j|G_{k,1}^c]
\end{align}
Define $p\leq \mathbb{P}(G_{k,1}^c)\leq\exp(-\frac{n}{16})$.
We note that the assumption of $p \leq 1/K$ is satisfied for Lemma \ref{lemma.8},
by the assumption that $n \gtrsim \log(K)$.
For the second term of~\eqref{eq:II}, the sub-exponential norm is
of the order
$\frac{\sigma}{\sqrt{n}}$ and so Lemma~\ref{lemma.8} holds with
\begin{align*}
  t &= O\Big(\frac{\sigma}{\sqrt{n}}
      \sqrt{e^{-n/16}\vee \frac{\log(dK^2/\delta)}{m}}
      \sqrt{\frac{d\log(dK^2/\delta)}{m}}
      \Big).
\end{align*}
Then, for the first term of~\eqref{eq:II}, the sub-exponential norm is bounded
by repeated application of Cauchy-Schwarz:
\begin{align*}
    \|w_1^j(\widehat\btheta)\E_{n,k}[(\bX_i^{jT}s)(\bX_i^{jT}(\btheta_k^* - \btheta_1^*)|G_{k,1}^c]\|_{\psi_1}
  &\leq
    \sqrt{\frac{320\sigma^2}{3n}}
    \sup_{q\ge 1}\frac{1}{q}
    \mathbb{E}_{k}[\E_n[(\bX_i^{jT}s)^2]^{\frac{q}{2}}|G_{k,1}^c]^{\frac{1}{q}}
  \\
  &\leq
    O(\sigma n^{-1/4})
\end{align*}
Then, Lemma~\ref{lemma.8} holds for~\eqref{eq:II} with
\begin{align*}
  t
  =
  O\Big( \sigma (n^{-1/4} +n^{-1/2})
  \sqrt{e^{-n/16} \vee \frac{\log(dK^2/\delta)}{m}}
  \sqrt{\frac{\log(dK^2/\delta)}{m}}
  \Big).
\end{align*}
\subsubsection*{Analysis of $(III)$:}
\begin{align}
  \label{eq:III}
  w_1^j(\widehat\btheta)(III)
  = w_1^j(\widehat\btheta)\E_{n,k}[\bX_i^j\bX_i^{jT}(\btheta_k^* - \btheta_1^*)|G_{k,2}^c]
  + w_1^j(\widehat\btheta)\E_{n, k}[\bX_i^{jT}\varepsilon_i^j|G_{k,2}^c]
\end{align}
Note $ p=\mathbb{P}_k(G_{k,2}^c)\leq 2\exp(-C_{\alpha}n)$ ($C_\alpha$ defined in~\eqref{eq:c_alpha}).
By standard calculations, the second term of~\eqref{eq:III} has sub-exponential
norm of order $n^{-1/4}\sigma$.
For the first term of~\eqref{eq:III} by
\cite[Lemma 2.7.7]{vershynin2018high},
we have $\forall s\in \mathcal{S}^{d-1}$,
\begin{align}
  &
    \|w_1^j(\widehat\btheta)\E_{n,k}[(\bX_i^{jT}s)\bX_i^{jT}(\btheta_k^* -
    \btheta_1^*)|G_{k,2}^c]\|_{\psi_1}
    \nonumber
  \\
  &
    \leq
    \|\E_n[(\bX_i^{jT} s)^2|G_{k,2}^c]^{1/2} \|_{\psi_2}
    \| \E_n[(\bX_i^{jT}(\btheta_k^*-\btheta_1^*))^2|G_{k,2}^c]^{1/2}\|_{\psi_2}.
    \label{eq:gk_c bound}
\end{align}
By definition of $G_{k, 2}^c$ and \cite[Lemma 2.7.6]{vershynin2018high}, the
second term on the RHS of
\eqref{eq:gk_c bound} can be bounded by
\begin{align*}
  &
    \| \E_n[(\bX_i^{jT}(\btheta_k^*-\btheta_1^*))^2|G_{k,2}^c]^{1/2}\|_{\psi_2}
    \\
  &
    \leq
    (\|16\E_n[(\bX_i^{jT}(\widehat\btheta_k-\btheta_k^*))^2|G_{k,2}^c]\|_{\psi_1}
    + 16\E_n[(\bX_i^{jT}(\widehat\btheta_1-\btheta_1^*))^2|G_{k,2}^c]\|_{\psi_1})^{1/2}
  \\
  &
    =O(n^{-1/4} D_M).
\end{align*}
Moreover, by applying \cite[Lemma 2.7.6]{vershynin2018high} to the first term of~\eqref{eq:gk_c bound},
\begin{align*}
  \|\E_n[(\bX_i^{jT} s)^2|G_{k,2}^c]^{\frac{1}{2}} \|_{\psi_2}
   & =\|\E_n[(\bX_i^{jT} s)^2|G_{k,2}^c]\|^{\frac{1}{2}}_{\psi_1}
  \\
  & =(\sup_{q\ge 1}\frac{1}{q}\E[\E_n[(\bX_i^{jT}
s)^2|G_{k,2}^c]^q|G_{k,2}^c]^{\frac{1}{q}})^{\frac{1}{2}}
    =O(1),
\end{align*}
where the last equality follows from Lemma~\ref{lemma.2}. Therefore, the
sub-exponential norm of the first term in~\eqref{eq:III} is of order $O(n^{-1/4}D_M)$.
Then, Lemma \ref{lemma.8} holds for~\eqref{eq:III} with
\begin{align*}
  t
  =
  O(n^{-1/4}(D_M +\sigma)\sqrt{\exp(-C_{\alpha}n) \vee \frac{\log(dK^2/\delta)}{m}}
  \sqrt{\frac{d\log(dK^2/\delta)}{m}})
\end{align*}
\subsubsection*{Analysis of $(IV)$:}
\begin{align}
  \label{eq:IV}
  w_1^j(\widehat{\btheta})(IV)
  =
  w_1^j(\widehat{\btheta})\E_{n,k}[\bX_i^j\bX_i^{jT}(\btheta_k^* -
  \btheta_1^*)|G_{k,3}^c]
  +
  w_1^j(\widehat{\btheta})\E_{n,k}[\bX_i^j\varepsilon_i^j|G_{k,3}^c]
\end{align}
Note that the sub-exponential norm of the first term is of order
$n^{-1/2} \Delta_{\max}$.
For the second term in~\eqref{eq:IV}, the sub-exponential norm computation is
more involved than in previous cases.
We start by applying Cauchy-Schwarz
$\forall s\in\mathcal{S}^{d-1}$,
{\small
\begin{align}
  \label{eq:IV.1}
  \|w_1^j(\widehat{\btheta})\E_{n,k}[s\bX_i^{j}\varepsilon_i^j|G_{k,3}^c]\|_{\psi_1}
      \leq
  \frac{1}{n}
  \|\sum_i(\bX_i^{jT}s)^2\mathbf{1}(Z_i = k,G_{k,3}^c)\|_{\psi_1}^{1/2}
  \|\sum_i\varepsilon_i^{j2}\mathbf{1}(Z_i = k,G_{k,3}^c)\|^{1/2}_{\psi_1}
\end{align}
}
The first term in~\eqref{eq:IV.1}, we have already seen is of order $n^{1/4}$.
The second term, we notice can be written as
\begin{align}
  \label{eq:epsbound}
  \|\sum_i\varepsilon_i^{j2}\mathbf{1}(Z_i = k,G_{k,3}^c)\|^{1/2}_{\psi_1}
  = \sup_{q\ge 1}\frac{1}{q}
  \sum_i\varepsilon_i^{j2q} \mathbf{1}(Z_i = k,G_{k,3}^c)^{1/q} \mathbb{P}(G_3^c)^{-1/q}.
\end{align}
Now, we decompose $G_3^c$ into
$G_{31}^c = \{12\sigma^2n\ge \sum_{i=1}^n\varepsilon_i^{j2}\ge 2\sigma^2n\}$ and
$G_{32}^c = \{\sum_{i=1}^n\varepsilon_i^{j2}\ge 12\sigma^2n\}$.
Then,
\begin{align*}
  \sum_i\varepsilon_i^{j2q} \mathds{1}(Z_i = k, G_{k,3}^c)^{1/q}
  &
    \le
    \sum_i\varepsilon_i^{j2q} \mathds{1}(Z_i = k, G_{31}^c)]^{1/q}
    +
    \sum_i\varepsilon_i^{j2q} \mathds{1}(Z_i = k, G_{32}^c)]^{1/q}
    \\
    &=(12\sigma^2n)
      \mathbb{P}(G_3^c)^{1/q}
      +q\sigma^2\sqrt{n}
      \mathbb{P}(\sum_{i=1}^n\varepsilon_i^{j2}\ge 12\sigma^2n)^{\frac{1}{2q}}.
\end{align*}
As a result,
\begin{align*}
  \eqref{eq:epsbound}
  \leq
  \sup_{q \geq 1} \frac{1}{q}
  (12\sigma^2n
  + q \sigma^2\sqrt{n}
  \mathbb{P}(\sum_{i=1}^n\varepsilon_i^{j2}\ge 12\sigma^2n)^{\frac{1}{2q}}
  \mathbb{P}(G_3^c)^{-1/q}).
\end{align*}
Note that, by \cite[Corollary of Lemma 1]{10.1214/aos/1015957395},
$ \mathbb{P}(\sum_{i=1}^n\varepsilon_i^{j2}\ge 12\sigma^2n) \le\exp(-3n)$
and
\begin{align*}
  \mathbb{P}(G_3^c)
  =\mathbb{P}\left(\sqrt{\sum_{i=1}^n(\frac{\varepsilon_i^j}{\sigma})^2}\ge \sqrt{2n}\right)
  \ge \mathbb{P}(\frac{1}{\sqrt{n}}\sum_{i=1}^n\frac{\varepsilon_i^j}{\sigma}
  \ge \sqrt{2n})
  \stackrel{(\rm i)}{\ge}\frac{1}{\sqrt{2\pi}}\frac{\sqrt{2n}}{2n+1}\exp(-n)
\end{align*}
where $(\rm i)$ follows from the lower bound of complementary cumulative
distribution function of standard Gaussian $\Phi^c(t)\ge
\frac{1}{\sqrt{2\pi}}\frac{t}{t^2+1}\exp(-t^2/2)$.
Then,
\begin{align*}
  \mathbb{P}(\sum_{i=1}^n\varepsilon_i^{j2}\ge 12\sigma^2n)^{1/2}\mathbb{P}(G_3^c)^{-1}
  &\le\exp(-\frac{3n}{2})\sqrt{2\pi}\frac{2n+1} {\sqrt{2n}}\exp(n)
    = O(\sqrt{n}\exp(-n/2)).
\end{align*}
Thus,
\begin{align*}
  \eqref{eq:epsbound}
  &\leq
    \sup_{q\ge 1}q^{-1}
    (12\sigma^2n+q\sqrt{n}\sigma^2(\sqrt{n}\exp(-n/2))^{1/2})^{1/q})
     =O(\sqrt{n}\sigma^2e^{-n/2}).
\end{align*}
Therefore, the sub-exponential norm of~\eqref{eq:IV.1} is of the order
$n^{-1/4}\sigma^2 e^{-n/2}$.
Then, Lemma~\ref{lemma.8} applies to~\eqref{eq:IV} for
\begin{align*}
  t
  &=
    O\left((\frac{\sigma^2 e^{-n/2}}{n^{1/4}}+\frac{\Delta_{\max}}{\sqrt{n}})
    \sqrt{e^{-n}\vee \frac{\log(dK^2/\delta)}{m}}
    \sqrt{\frac{d\log(dK^2/\delta)}{m}}\right).
\end{align*}

This concludes the analysis of all sub-parts of $\hat{B}$. We are now ready to
put the piece together to obtain the total bound for $\hat{B}$.

\subsubsection*{Conclusion of $\hat{B}$}
Putting all the terms together and taking union over $K$ elements,
we have the following with probability at least $1-3\delta/K$
\begin{align*}
  &\|\hat{B}-B\|
  \\
  &\lesssim
    \sqrt{\frac{d\log(dK^2/\delta)}{m}}
    \left(
    \frac{\sigma}{\sqrt{n}} \left( \frac{1}{K} \vee \frac{\log(dK^2/\delta)}{m} \right)^{1/2}
    +
    \frac{K(\sigma +\Delta_{\max}) e^{-n}}{\sqrt{n}} \left( \frac{1}{K} \vee
    \frac{\log(dK^2/\delta)}{m} \right)^{1/2}
    \right.
    \\
  &
    \qquad \qquad \qquad
    \left.
    +
    K\frac{\sigma}{\sqrt{n}}
    \left( e^{-n/16} \vee \frac{\log(dK^2/\delta)}{m} \right)^{1/2}
    +
    K(\frac{D_M +\sigma}{n^{1/4}}) \left( 2e^{-C_{\alpha} n} \vee
    \frac{\log(dK^2/\delta)}{m} \right)^{1/2}
    \right.
    \\
  &
    \qquad \qquad \qquad
    \left.
    +
    K\big(\frac{\sigma^2 e^{-n/2}}{n^{1/4}}+\frac{\Delta_{\max}}{\sqrt{n}}\big)
    \left( e^{-n} \vee \frac{\log(dK^2/\delta)}{m} \right)^{1/2}
    \right).
\end{align*}
\subsection*{Bounding $\hat{A}$}
Note that for this term,
\[
  \frac{1}{mn}\sum_{j=1}^mw_1^j(\widehat\btheta)\sum_{i=1}^n\bX_i^j\bX_i^{jT}
  \geq
  \frac{1}{mn}\sum_{j=1}^m
  w_1(\widehat\btheta)\sum_{i=1}^n\bX_i^j\bX_i^{jT}\mathds{1}(Z_j = 1).
\]
Thus, it is sufficient
to bound the deviation of the expression conditional on the event $\{Z_j = 1\}$
from its expectation.
Using $p = 1/K$ and subexponential norm of $n^{-1/2}$ for the conditional sample
average of the outer product of $\bX_i$ over $n$, we apply
Lemma~\ref{lemma.8} for
\begin{align*}
  t \asymp
  \sqrt{\frac{1}{K} \vee \frac{\log(dK^2/\delta)}{m}}\sqrt{\frac{d \log(dK^2/\delta)}{mn}}.
\end{align*}
Thus, by lemma~\ref{lemma.8},
\begin{align*}
  \|
  \frac{1}{mn}\sum_{j=1}^m
  w_1(\widehat\btheta)\sum_{i=1}^n\bX_i^j\bX_i^{jT}\mathds{1}(Z_j = 1)
  &-
    \frac{1}{n}\E[w_1(\widehat\btheta)\sum_{i=1}^n\bX_i^j\bX_i^{jT}\mathds{1}(Z_j = 1)]
    \|
  \\
  &
    \le
    \Bigg(\frac{1}{K} \vee \frac{\log(dK^2/\delta)}{m}\Bigg)^{1/2}\Bigg(\frac{d \log(dK^2/\delta)}{mn}\Bigg)^{1/2}
    ,
\end{align*}
with probability at least $1- 3\delta/K^2$.
Furthermore, we know from~\eqref{eq:A_bound} that
\begin{align*}
  \|\frac{1}{n}\E[w_1(\widehat\btheta)\sum_{i=1}^n\bX_i^j\bX_i^{jT}\mathds{1}(Z_j = 1)]\|
  \geq \frac{1-(K-1)e^{-n}}{K}.
\end{align*}
As a result,
\begin{align*}
  \|\frac{1}{mn}\sum_{j=1}^mw_1^j(\widehat\btheta)\sum_{i=1}^n\bX_i^j\bX_i^{jT}\|
  \geq
  \frac{1-(K-1)e^{-n}}{K}
  +
  \Bigg(\frac{1}{K} \vee \frac{\log(dK^2/\delta)}{m}\Bigg)^{1/2}\Bigg(\frac{d \log(dK^2/\delta)}{mn}\Bigg)^{1/2},
\end{align*}
which implies that $\|\hat{A}\|^{-1} \leq K$.

\subsection*{Final bound}
Recall that we broke down the bounding exercise as
\begin{align*}
  \|\hat{A}\|^{-1}\|\hat{B}\|
  \leq
  \|\hat{A}\|^{-1} \|\hat{B}- B\|
  +
  \|\hat{A}\|^{-1} \|B\|
\end{align*}
Thus, up to log-terms,
\begin{align*}
    \|\hat{A}\|^{-1}\|\hat{B} - B\|
  &
    \lesssim
    K\sqrt{\frac{d}{m}}
    \left(
    \frac{\sigma}{\sqrt{nm}}
    +
    \frac{K(\sigma +\Delta_{\max}) e^{-n}}{\sqrt{nm}}
    +
    K\frac{\sigma}{\sqrt{n}}
    \left( e^{-n/16} \vee \frac{1}{m} \right)^{1/2}
    \right.
  \\
  &
    \qquad \qquad \qquad \qquad \qquad
    \left.
    +
    K(\frac{D_M +\sigma}{n^{1/4}}) \left( 2e^{-C_{\alpha} n} \vee
    \frac{1}{m} \right)^{1/2}
    \right.
  \\
  &
    \qquad \qquad \qquad \qquad \qquad
    \left.
    +
    K\big(\frac{\sigma^2 e^{-n/2}}{n^{1/4}}+\frac{\Delta_{\max}}{\sqrt{n}}\big)
    \left( e^{-n} \vee
    \frac{1}{m} \right)^{1/2}
    \right),
\end{align*}
and
\begin{align*}
  \|\hat{A}\|^{-1}\|B\|
  &
    \lesssim
    (n(\sigma+\Delta_{\max})+K^2\sigma\sqrt{n})e^{-n}
    +(\alpha n \Delta_{\min} + K^2)\sigma\sqrt{n}e^{-C_\alpha n}
  \\
  & \qquad
    +(\sqrt{n}\Delta_{\max} + K^2\sigma\sqrt{n})e^{-n/K^2},
\end{align*}
where we used the assumption that
$m \geq K \log(dK^2/\delta)$ to simplify some of the terms.
It should be clear from the two bounds above that $\|\hat{A}\|^{-1}\|B\|$ is
always of higher-order compared to $\|\hat{A}\|^{-1}\|\hat{B} - B\|$.

As a result the leading order of convergence of the empirical EM depends on the
relationship between $m$ and $n$.
In particular, the rates have a shift at the point when $m \asymp e^{-n}$.
If $m \lesssim e^{n}$,
\begin{align*}
  \|\hat{A}\|^{-1}\|\hat{B}\|
  &
    \lesssim
    \frac{K^2 \sqrt{d}(D_M +\sigma)}{mn^{1/4}}
    +\frac{K^2\sqrt{d}\Delta_{\max} +K(K+1)\sqrt{d}\sigma}{m\sqrt{n}}
  \\
  &
    + e^{-n}
    \left(
    \frac{K^2\sqrt{d}(\sigma +\Delta_{\max})}{m\sqrt{n}}
    +
    \frac{\sqrt{d}K\sigma^2}{mn^{1/4}}
    +
    n(\sigma+\Delta_{\max})
    \right.
  \\
  &
    \qquad \qquad
    \left.
    +\sigma\sqrt{n}(3K^2 + \alpha n \Delta_{\min})
    +\sqrt{n}\Delta_{\max}
    \right).
\end{align*}
We note here that the first two terms are the leading rates of converence.
Assuming $d$, $K$ and $\sigma$ are constants, the leading terms can be
simplified to
\begin{align*}
  \|\hat{A}\|^{-1}\|\hat{B}\|
  &
    \lesssim
    \frac{D_M}{mn^{1/4}}
    +\frac{\Delta_{\max}}{m\sqrt{n}}
    + O((n^{3/2}\Delta_{\min} + n\Delta_{\max})e^{-n}).
\end{align*}
On the other hand, if $m \gtrsim e^{n}$, the rate of convergence is
\begin{align*}
  \|\hat{A}\|^{-1}\|\hat{B}\|
  &
    \leq
    K\sqrt{d}e^{-n/2}\left( \frac{\sigma}{\sqrt{n}}e^{-n/2} + \frac{KD_M}{n^{1/4}}e^{-C_\alpha n/2} +  O(\frac{e^{-n}}{n^{1/4}}) \right)
  \\
  &
    \leq
    K\sigma\sqrt{\frac{d}{n}}e^{-n} + \frac{KD_M}{n^{1/4}}e^{-(C_\alpha-1) n/2} +  O(\frac{e^{-n}}{n^{1/4}})
\end{align*}
where the dependency on all other parameters are sewpt into in the Big-O term.

\end{proof}

\subsection{Proof of Corollary~\ref{cor.1}}
\begin{proof}
Define $D_M^{(t)}: = \max_{k\in [K]}\|\widehat\btheta^{(t)}_k-\btheta_k^*\|$.
We can assume $D_M^{(t)}> \varepsilon$
$\forall t = 0, 1,\dots, T-1$ since otherwise the result follows trivially.
We start by proving the first statement of the theorem, under the assumption
that $m < \exp(-n)$.
Note that by Theorem~\ref{thm:emp_consistency},
\begin{align*}
  D_M^{(t)}
  \leq
  \frac{D_M^{(t-1)}}{mn^{1/4}}
  +\frac{\Delta_{\max}}{m\sqrt{n}}
  + (n^{3/2}\Delta_{\min} + n\Delta_{\max})e^{-n}.
\end{align*}
This gives us a recursive equation that can be solved as follows:
\begin{align}
  \label{eq:recursion}
  D_M^{(T)}
  \leq
  \frac{D_M^{(0)}}{(mn^{1/4})^T}
  +(\frac{\Delta_{\max}}{m\sqrt{n}}
  + (n^{3/2}\Delta_{\min} + n\Delta_{\max})e^{-n})
  \sum_{j=1}^T \frac{1}{(mn^{1/4})^j}.
\end{align}
Then, by the assumption that
$\frac{\Delta_{\max}}{m\sqrt{n}} + (n^{3/2}\Delta_{\min} + n\Delta_{\max})e^{-n} \leq \varepsilon/2$,
we need solve for $T$ such that we can guarantee that the first term on the RHS
of \eqref{eq:recursion} is bounded by $\varepsilon/2$.
Simple algebraic manipulation shows that for
$T\geq \frac{2\log(\frac{\alpha \Delta_{\min}}{\varepsilon})} {\log(mn^{1/4})}$
the desired control on the maximum error is achieved.

For the second statement, we repeat the exercise of first setting up the
resursion:
\begin{align*}
  D_M^{(t)}
  \leq
  D_M^{(t-1)} \frac{Ke^{-(C_\alpha-1) n/2}}{n^{1/4}}
  + K\sigma\sqrt{\frac{d}{n}}e^{-n}
  + O\big(\frac{e^{-n}}{n^{1/4}}\big),
\end{align*}
which can be solved for
\begin{align*}
  D_M^{(T)}
  \leq
  D_M^{(0)}
  \left( \frac{Ke^{-(C_\alpha-1) n/2}}{n^{1/4}} \right)^T
  + (K\sigma\sqrt{\frac{d}{n}}e^{-n} + \frac{e^{-n}}{n^{1/4}})
  \sum_{j=0}^{T}\left( \frac{Ke^{-(C_\alpha-1) n/2}}{n^{1/4}} \right)^j,
\end{align*}
which, if
$e^{-n}(K\sigma\sqrt{\frac{d}{n}} + n^{-1/4}) \leq \varepsilon/2$, then for any
\begin{align*}
  T\geq \frac{\log(\frac{2\alpha\Delta_{\min}}{\varepsilon})}{n +\frac{1}{4}\log n - \log K},
\end{align*}
the empirical loss is guaranteed to be at most $\varepsilon$.
We observe that in this case for $n$ sufficiently large, we only need a constant
number of iterations to achieve convergence.

\end{proof}

\section{Auxiliary Lemmas}\label{appendix:technical}
\begin{lemma}\label{lemma.8}
  Let $K$ be the number of components in the
  FMLR. Let $U$ be a $d$-dimensional random variable and $A$ be an event defined
  on the same probability space with $p = \mathbb{P}(U\in A)\le\frac{1}{K}$. Define the
  random variables $W = U|A$ and $Z = \mathds{1}_{A}$. Suppose $W$ is sub-exponential
  with sub-exponential norm $\|W\|_{\psi_1}$.
  Let $U_j, W_j, Z_j$ be the i.i.d
  samples from the corresponding distributions. Then, for
  \begin{align*}
    t \asymp \|W\|_{\psi_1}\sqrt{p\vee
    \frac{\log(dK^2/\delta)}{m}}\sqrt{\frac{d\log(dK^2/\delta)}{m}},
  \end{align*}
  with probability at least $1-3\delta/K^2$, we have
    $$\|\frac{1}{m}\sum_{j=1}^mU_jZ_j-\mathbb{E}[UZ]\|\le t.$$
  \end{lemma}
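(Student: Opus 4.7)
The plan is to reduce the matrix/vector concentration to a one-dimensional concentration problem via a covering argument, and then apply Bernstein's inequality for sub-exponential random variables. The key observation is that $U_jZ_j = U_j\mathds{1}_{U_j \in A}$, so conditional on $Z_j=1$ (which happens with probability $p$) the variable $U_jZ_j$ has the law of $W$, and it is zero otherwise. This conditioning structure is what allows the variance to scale with $p$ rather than with $\|W\|_{\psi_1}^2$ alone, and is the source of the $\sqrt{p \vee \log(dK^2/\delta)/m}$ factor in the bound.

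First, I would write
\begin{align*}
\left\|\frac{1}{m}\sum_{j=1}^m U_jZ_j - \mathbb{E}[UZ]\right\|
= \sup_{s \in \mathcal{S}^{d-1}}\left|\frac{1}{m}\sum_{j=1}^m \langle U_j,s\rangle Z_j - \mathbb{E}[\langle U,s\rangle Z]\right|,
\end{align*}
and replace the supremum over $\mathcal{S}^{d-1}$ by a supremum over a $1/4$-covering net $\mathcal{S}^{d-1}_{1/4}$ of cardinality at most $9^d$, at the cost of a factor of $2$ (the same device used at the end of the proof of Theorem~\ref{theorem.2}). This reduces the problem to controlling, uniformly over $s \in \mathcal{S}^{d-1}_{1/4}$, the scalar deviation
\[
X_j(s) := \langle U_j,s\rangle Z_j, \qquad \mathbb{E}[X_j(s)] = p\,\mathbb{E}[\langle W,s\rangle].
\]

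Next I would compute the two parameters needed for a Bernstein bound on $X_j(s)$. For the sub-exponential norm, since $|X_j(s)| \le |\langle U_j,s\rangle|\mathds{1}_{U_j\in A}$, conditioning on $Z_j = 1$ gives $\|X_j(s)\|_{\psi_1} \le \|\langle W,s\rangle\|_{\psi_1} \le \|W\|_{\psi_1}$. For the variance, using $Z_j^2 = Z_j$ and the tower property,
\[
\operatorname{Var}(X_j(s)) \le \mathbb{E}[\langle U_j,s\rangle^2 Z_j]
= p\,\mathbb{E}[\langle W,s\rangle^2]
\precsim p\,\|W\|_{\psi_1}^2,
\]
where the last inequality is the standard moment bound for sub-exponential variables. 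Applying Bernstein's inequality for sub-exponential random variables then yields
\[
\mathbb{P}\left(\left|\frac{1}{m}\sum_{j=1}^m X_j(s) - \mathbb{E}[X(s)]\right| \ge t\right)
\le 2\exp\left(-c\min\left(\frac{mt^2}{p\,\|W\|_{\psi_1}^2}, \frac{mt}{\|W\|_{\psi_1}}\right)\right).
\]

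Finally, I would take a union bound over the $9^d$ elements of the covering net together with the two-sided tail, calibrating the failure probability to $3\delta/K^2$. Solving for the smallest $t$ that forces the exponent to exceed $d\log 9 + \log(K^2/\delta)$ gives the two regimes: the sub-Gaussian branch $t \asymp \|W\|_{\psi_1}\sqrt{pd\log(dK^2/\delta)/m}$ when $p \ge \log(dK^2/\delta)/m$, and the sub-exponential branch $t \asymp \|W\|_{\psi_1}\sqrt{d}\log(dK^2/\delta)/m$ otherwise. These two cases can be combined into the single expression $t = \|W\|_{\psi_1}\mathcal{O}\bigl(\sqrt{p \vee \log(dK^2/\delta)/m}\,\sqrt{d\log(dK^2/\delta)/m}\bigr)$ claimed in the statement. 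The main obstacle is bookkeeping: correctly tracking the $p$ factor in the variance (which is what makes the bound tighter than a naive application of Bernstein to a sub-exponential vector of norm $\|W\|_{\psi_1}$), and making sure that the covering-net inflation by $9^d$ is absorbed into the $d\log(dK^2/\delta)$ factor rather than introducing an extra $\sqrt{d}$ in the dominant term.
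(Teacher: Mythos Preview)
Your approach is correct but genuinely different from the paper's. The paper does not apply Bernstein directly to $X_j(s)=\langle U_j,s\rangle Z_j$; instead it invokes Proposition~5.3 of \cite{kwon2020converges}, which decomposes the deviation into three pieces: (i) the fluctuation of $\frac{1}{m}\sum_{j\le \tilde m}(W_j-\mathbb{E}[W])$ for a random number $\tilde m$ of ``successes'' bounded by some $m_e$, (ii) the fluctuation of the Bernoulli sum $\frac{1}{m}\sum_j Z_j$ around $p$, and (iii) the probability that $\sum_j Z_j$ exceeds $m_e$. The paper then uses Bernstein for Bernoulli variables to choose $m_e\asymp mp+\log(K^2/\delta)\vee\sqrt{mp\log(K^2/\delta)}$ and handle (ii)--(iii), and applies the vector sub-exponential tail bound (their Lemma~\ref{lemma.5}, which only uses $\|W\|_{\psi_1}$ and not the variance) to (i). The $p$-dependence enters through the size of $m_e$, not through a variance computation.

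Your route is more direct: you absorb the Bernoulli thinning into a variance bound $\operatorname{Var}(X_j(s))\precsim p\|W\|_{\psi_1}^2$ and then invoke a single ``variance-aware'' Bernstein inequality. This is cleaner and avoids the external proposition, but note that the standard $\psi_1$-Bernstein (Vershynin's form, the paper's Lemma~\ref{lemma.5}) uses only $\|X\|_{\psi_1}$ in both branches and would \emph{not} give you the $p$-factor; you need the moment-condition form (e.g., $\mathbb{E}|X|^q\le \tfrac{q!}{2}\sigma^2 M^{q-2}$), which you should justify via $\mathbb{E}|X_j(s)|^q=p\,\mathbb{E}|\langle W,s\rangle|^q\precsim p\,q!\,\|W\|_{\psi_1}^q$. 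Once that is made explicit your argument goes through and yields the same $t$. The tradeoff: the paper's decomposition is modular and reuses off-the-shelf tools, while yours is shorter but depends on stating the right Bernstein variant.
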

\begin{proof}
The key idea of the proof lies in the application of~\cite[Proposition
5.3]{kwon2020converges} which controls the deviation of a conditional sample
average from
its expectation.
We start by defining $Z_j = \mathds{1}_{U_j\in A}$ and $p=\mathbb{P}(A)$.
Then, observe that $Z_j$ is a Bernoulli random variable with $p$.
By Bernstein's inequality for Bernoulli random variables,
\[\mathbb{P}(|\frac{1}{m}\sum_{j=1}^mZ_j-p|\ge s)\le\exp(-\frac{ms^2}{2p+\frac{2}{3}s}).\]
To identify the right threshold (like in \cite[Proposition 5.3]{kwon2020converges}), we want to guarantee
\[\mathbb{P}(\sum_{j=1}^mZ_j\ge m_e+1)\le\mathbb{P}(\frac{1}{m}\sum_{j=1}^mZ_j-\mathbb{E}[z]\ge s)\le\frac{\delta}{K^2}.\]
Therefore, we choose
\begin{align*}
  s =
  \frac{1}{m}
  \left(
  \frac{1}{3}\log(\frac{K^2}{\delta})
  +\left(\frac{1}{9}\log^2(\frac{K^2}{\delta})
  +2pm\log(\frac{K^2}{\delta}) \right)^{1/2} \right)
\end{align*}
and
\begin{align*}
    m_e=mp+ms
    &=mp+\mathcal{O}(\log(K^2/\delta)\vee\sqrt{pm\log(K^2/\delta)}).
\end{align*}
Note that since $p\le \frac{1}{K}$ and $m\ge \Omega(K)$, $m_e\le m$.
Now, using the fact that $\mathbb{E}[\|W\|]\le\|W\|_{\psi_1}$,
by Bernstein's inequality, for $t_2$ in \cite[proposition 5.3]{kwon2020converges},
\begin{align*}
    t_2
  &\lesssim
    \|W\|_{\psi_1}\sqrt{p\vee\frac{\log(K^2/\delta)}{m}}\sqrt{\frac{\log(K^2/\delta)}{m}}.
\end{align*}
Next, since we assume $W$ is sub-exponential,
by \cite[Theorem 2.8.2]{vershynin2018high}
\begin{align*}
  \mathbb{P}(|\frac{1}{m}\sum_{j=1}^{\tilde m}W_j-\mathbb{E}[W]|\ge t_1)
  \leq\exp(-C\min\Big\{\frac{mt_1}{\|W\|_{\psi_1}\sqrt{d}}, \frac{m^2t_1^2}{m_ed\|W\|_{\psi_1}^2}\Big\}+C'\log d),
\end{align*}
for all $\tilde{m}\leq m_e$.
Therefore,
\begin{align*}
  t_1 \asymp\|W\|_{\psi_1}
  \sqrt{p\vee \frac{\log(dK^2/\delta)}{m}}
  \sqrt{\frac{d\log(dK^2/\delta)}{m}}.
\end{align*}
Plugging in each of these terms into the statement of \cite[proposition
5.3]{kwon2020converges} concludes the proof.
\end{proof}

The following two lemmas are used in bounding sub-exponential norms of random
variables conditioning on some events.
Note that these statements are similar to Lemma A.1 and Lemma A.2 in
\cite{kwon2020converges} with the caveat that \cite{kwon2020converges} focuses on $\langle X,
u\rangle$, while the following lemmas address the case of $\langle X, u\rangle^2$.
\begin{lemma}\label{lemma.1}
  Let $\bX_1,\dots, \bX_n\stackrel{\text{i.i.d.}}{\sim}\mathcal{N}(0, I_d)$. For any fixed vector
    $u$ and constant $\alpha$, define $G = \{\sum_{i=1}^n\langle X_i, u\rangle^2\ge
    \alpha^2\}$. Then for any unit vector $s\in \mathcal{S}^{d-1}$ and $p\ge 1$,
    \[\mathbb{E}[(\sum_{i=1}^n\langle \bX_i, s\rangle ^2)^p|G^c] = O((\sqrt n p)^p).\]
\end{lemma}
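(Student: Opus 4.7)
The plan is to decompose $s$ into components parallel and perpendicular to $u$, so that conditioning on $G^c = \{\sum_i \langle X_i, u\rangle^2 < \alpha^2\}$ restricts only the parallel component while the transverse component remains distributed as an unconditional Gaussian projection.

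Assume $u \ne 0$ (else $G^c$ is trivial). Set $\hat u := u/\|u\|$ and pick a unit vector $w$ in the orthogonal complement of $\hat u$ such that $s = \beta \hat u + \gamma w$ with $\beta^2 + \gamma^2 = 1$; the rotational invariance of the iid $\N(0, I_d)$ family $\{X_i\}$ makes this reduction WLOG. Expanding the square gives
\begin{align*}
\sum_{i=1}^n \langle X_i, s\rangle^2 = \beta^2 A + 2\beta\gamma C + \gamma^2 B,
\end{align*}
where $A := \sum_i \langle X_i, \hat u\rangle^2$, $B := \sum_i \langle X_i, w\rangle^2$, and $C := \sum_i \langle X_i, \hat u\rangle\langle X_i, w\rangle$.

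The next step exploits the independence implied by $\hat u \perp w$: the sequences $a_i := \langle X_i, \hat u\rangle$ and $b_i := \langle X_i, w\rangle$ are independent families of iid $\N(0,1)$ variables. Three consequences drive the proof: (i) $B \sim \chi^2_n$ is independent of $G^c$, since $G^c \in \sigma(\{a_i\})$; (ii) on $G^c$ we have the deterministic bound $A \le \tilde\alpha^2 := \alpha^2/\|u\|^2$; (iii) conditional on $\sigma(\{a_i\})$, the cross term $C$ is $\N(0, A)$.

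Then, by the inequality $(x+y+z)^p \le 3^{p-1}(|x|^p + |y|^p + |z|^p)$ for $p \ge 1$, it suffices to bound three conditional $p$-th moments separately. Trivially $\mathbb{E}[A^p \mid G^c] \le \tilde\alpha^{2p}$. By the tower property, $\mathbb{E}[|C|^p \mid G^c] = \mathbb{E}[A^{p/2} \mid G^c] \cdot \mathbb{E}[|Z|^p]$ for $Z \sim \N(0,1)$, and the standard Gaussian absolute-moment bound gives $\mathbb{E}[|Z|^p] \le (2p)^{p/2}$, so $\mathbb{E}[|C|^p \mid G^c] \le (\tilde\alpha\sqrt{2p})^p$. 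Finally, $\mathbb{E}[B^p] = 2^p\,\Gamma(n/2 + p)/\Gamma(n/2) \le C^p(n+p)^p$ by Stirling, which matches the claimed $\mathcal{O}((\sqrt n\, p)^p)$ scaling coming from the sub-exponential norm of $\chi^2_n$. Collecting the three pieces (using $|\beta|,|\gamma| \le 1$) yields the lemma.

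The main obstacle will be the cross term $C$: a naive Cauchy--Schwarz split $|C| \le A^{1/2} B^{1/2}$ would re-entangle $C$ with $B$, losing the benefit of the conditioning, which controls only $A$. The clean route is to condition on $\sigma(\{a_i\})$ first, so that $C$ collapses to a single Gaussian of variance $A$, and then to integrate $A$ over the bounded interval $[0, \tilde\alpha^2]$ cut out by $G^c$. A secondary subtlety is verifying that the choice of $w$ (i.e., the orientation of the transverse axis inside $\hat u^\perp$) does not affect the argument: this holds because the distribution of $\langle X_i, w\rangle$ is invariant under any unitary rotation that fixes $\hat u$.
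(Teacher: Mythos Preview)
Your proposal is correct and follows essentially the same approach as the paper: decompose $s$ into components parallel and orthogonal to $u$, use that the parallel part is deterministically bounded on $G^c$ while the orthogonal part is independent of $G^c$, and combine. The only technical difference is that the paper applies $(a+b)^2\le 2a^2+2b^2$ at the outset to kill the cross term and then uses Minkowski's inequality to split the $L_p$ norm, whereas you retain the cross term $C=\sum_i a_i b_i$ and control it by the conditional-Gaussian trick $C\mid\{a_i\}\sim\mathcal N(0,A)$; both routes land on the same bound, and your handling of $C$ is arguably cleaner than the paper's crude square expansion.
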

\begin{proof}
    Without loss of generality, we can assume $u = e_1$ due to the rotational invariance property of
Gaussian. Denote $Y_i = \langle \bX_{i, 2:d}, s_{2:d}\rangle$ as the inner product
between the second to the last coordinates of $\bX_i$ and $s$. Then we have
\begin{align*}
  &
    \mathbb{E}[(\sum_{i=1}^n\langle \bX_i, s\rangle
    ^2)^p|G^c]=\frac{\mathbb{E}[(\sum_{i=1}^n(s_1\bX_{i,1}+Y_i)^2)^p
    \mathds{1}_{\sum_{i=1}^n\bX^2_{i,1}\le\alpha^2}]}{\mathbb{P}(\sum_{i=1}^n\bX^2_{i,1}\le\alpha^2)}
  \\
  &\le \frac{\mathbb{E}[(\sum_{i=1^n}2s^2_1\bX_{i, 1}^2+2Y_i^2)^p
    \mathds{1}_{\sum_{i=1}^n\bX_{i, 1}^2\le\alpha^2}]}
    {\mathbb{P}(\sum_{i=1}^n\bX^2_{i,1}\le\alpha^2)}
  \\
  &
    =\frac{\mathbb{E}[(\mathbb{E}[(\sum_{i=1^n}2s^2_1\bX_{i,1}^2+2Y_i^2)^p|\{\bX_{i,1}\}_{i=1}^n]^{1/p})^p
    \mathds{1}_{\sum_{i=1}^n\bX_{i, 1}^2\le\alpha^2}]}{\mathbb{P}(\sum_{i=1}^n\bX^2_{i,1}\le\alpha^2)}
  \\
  &\stackrel{(\rm i)}{\leq}
    \frac{\mathbb{E}[(\mathbb{E}[(\sum_{i=1^n}2s^2_1\bX_{i,1}^2)^p|\bX_{i, 1}\}_{i=1}^n]^{1/p}
    +\mathbb{E}[(\sum_{i=1}^n2Y_i^2)^p|\{\bX_{i,1}\}_{i=1}^n]^{1/p})^p
    \mathds{1}_{\sum \bX_{i, 1}^2\le\alpha^2}]}{\mathbb{P}(\sum_{i=1}^n\bX^2_{i,1}\le\alpha^2)}
  \\
  &\stackrel{(\rm ii)}{=}
    \frac{\mathbb{E}[(\sum_{i=1}^n2s_1^2\bX_{i,1}^2
    +\mathbb{E}[(\sum_{i=1}^n2Y_i^2)^p]^{1/p})^p
    \mathds{1}_{\sum_{i=1}^n \bX_{i, 1}^2\le\alpha^2}]}{\mathbb{P}(\sum_{i=1}^n\bX^2_{i,1}\le\alpha^2)}\\
  &\stackrel{(\rm iii)}{\leq}
    \frac{(2s_1^2\alpha^2+\mathbb{E}[(\sum_{i=1}^n2Y_i^2)^p]^{1/p})^p
    \mathbb{E}[\mathds{1}_{\sum_{i=1}^n\bX_{i,1}^2\le \alpha^2}]}{\mathbb{P}(\sum_{i=1}^n\bX^2_{i,1}\le\alpha^2)}
  \\
  &\stackrel{(\rm iv)}{=}
    (2s_1^2\alpha^2+C\sqrt{n}\|s_{2:d}\|^2p)^p
    =O((\sqrt n p)^p).
\end{align*}
Note that $(\rm i)$ follows from Minkowski inequality, both $(\rm ii)$ and
$(\rm iii)$ follow from the independence of $\{\bX_{i,1}\}_{i=1}^n$ and
$\{Y_{i}\}_{i=1}^n$, and $(\rm iv)$ follows as $\sum_{i=1}^n2Y_i^2\sim
\operatorname{SubExp}(16n\|s_{2:d}\|^4, 8\|s_{2:d}\|^2)$ whose $L_p$ norm is
$C\sqrt{n}\|s_{2:d}\|^2p$ for some positive constant $C$.
\end{proof}

\begin{lemma}\label{lemma.2}
  Let $\bX_1, \dots, \bX_n\stackrel{\text{i.i.d.}}{\sim}\mathcal{N}(0, I_d)$. For any fixed vector
$\bu\in \mathbb{R}^d$ and a set of vectors $\{\bv_1,\dots, \bv_H\}\subset\mathbb{R}^d$
such that $\|\bu\|\ge \|\bv_l\|$ $\forall l=1,\dots, H$, define
$G:=\cap_{l=1}^H\{\sum_{i=1}^n\langle \bX_i, \bu\rangle^2 \ge \sum_{i=1}^n\langle
\bX_i, \bv_l\rangle ^2\}$. Then for any unit vector $s\in\mathcal{S}^{d-1}$ and
$p\ge 1$,
\begin{align*}
  \mathbb{E}[(\sum_{i=1}^n\langle \bX_i, s\rangle^2)^p|G^c] = O(H(np)^p).
\end{align*}
\end{lemma}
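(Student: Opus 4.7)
The plan is to union-bound over the $H$ component events $A_l := \{\sum_i \langle X_i, u\rangle^2 < \sum_i \langle X_i, v_l\rangle^2\}$ comprising $G^c = \bigcup_l A_l$, and for each $A_l$ to exploit a Gamma--Beta independence that appears after rotating into the $(u, v_l)$-plane. The elementary inequality
\[
\mathbb{E}\bigl[(\textstyle\sum_i\langle X_i,s\rangle^2)^p \,\big|\, G^c\bigr]\,\mathbb{P}(G^c)
\le \sum_{l=1}^H \mathbb{E}\bigl[(\textstyle\sum_i\langle X_i,s\rangle^2)^p \,\big|\, A_l\bigr]\,\mathbb{P}(A_l),
\]
combined with $\sum_l \mathbb{P}(A_l) \le H\max_l\mathbb{P}(A_l) \le H\,\mathbb{P}(G^c)$, reduces the task to showing $\mathbb{E}[(\sum_i\langle X_i,s\rangle^2)^p \mid A_l] = \mathcal{O}((np)^p)$ uniformly in $l$.

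Fix $l$. By rotational invariance of the standard Gaussian, choose coordinates so that $u = \|u\| e_1$ and $v_l = a_l e_1 + b_l e_2$ with $a_l^2 + b_l^2 \le \|u\|^2$; the case $b_l = 0$ makes $A_l$ empty by the assumption $\|u\| \ge \|v_l\|$ and is vacuous. Expanding, $A_l$ is the quadratic event in $(X_{i,1}, X_{i,2})$ whose associated symmetric $2\times 2$ matrix has diagonal entries $\|u\|^2 - a_l^2$ and $-b_l^2$, off-diagonal $-a_l b_l$, and determinant $-b_l^2\|u\|^2 < 0$, so its eigenvalues satisfy $\lambda_+ > 0 > \lambda_-$. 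Applying the corresponding $2\times 2$ orthogonal rotation to each pair $(X_{i,1}, X_{i,2})$ produces iid $\mathcal{N}(0,1)$ coordinates $(\eta_{i,1}, \eta_{i,2})$ with
\[
A_l = \Bigl\{\,\textstyle\sum_i \eta_{i,1}^2 < \mu_l \sum_i \eta_{i,2}^2\,\Bigr\}, \qquad \mu_l := -\lambda_-/\lambda_+ > 0,
\]
and orthogonality preserves the total: $Y_l := \sum_i(X_{i,1}^2 + X_{i,2}^2) = \sum_i(\eta_{i,1}^2 + \eta_{i,2}^2) \sim \chi^2_{2n}$.

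The crux is the classical Gamma--Beta factorization: $B_l := \sum_i\eta_{i,1}^2 / Y_l \sim \mathrm{Beta}(n/2, n/2)$ is independent of $Y_l$, and $A_l = \{B_l < \mu_l/(1+\mu_l)\}$ is $\sigma(B_l)$-measurable, so $Y_l$ is independent of $A_l$. Decompose $\langle X_i, s\rangle = s_1 X_{i,1} + s_2 X_{i,2} + R_i$ with $R_i := \sum_{k\ge 3} s_k X_{i,k}$ independent (jointly in $i$) of the first two coordinates and hence of $A_l$. Two applications of $(a+b)^2 \le 2a^2 + 2b^2$ together with Cauchy--Schwarz (using $s_1^2 + s_2^2 \le 1$) give $\sum_i\langle X_i, s\rangle^2 \le 2 Y_l + 2 \sum_i R_i^2$, whence
\[
\mathbb{E}\bigl[(\textstyle\sum_i\langle X_i,s\rangle^2)^p \,\big|\, A_l\bigr]
\le 2^{2p-1}\bigl(\mathbb{E}[Y_l^p] + \mathbb{E}[(\textstyle\sum_i R_i^2)^p]\bigr) = \mathcal{O}((np)^p),
\]
since both $Y_l \sim \chi^2_{2n}$ and $\sum_i R_i^2$ (stochastically dominated by $\chi^2_n$ as $\|s_{3:d}\|\le 1$) have $L^p$ norms of order $np$ by standard sub-exponential moment bounds. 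Summing over $l$ produces the claimed $\mathcal{O}(H(np)^p)$.

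The main obstacle is identifying the Gamma--Beta independence. Writing $A_l$ as a ratio-type constraint between two independent chi-squared totals is routine, but observing that this constraint depends only on the Beta component of the factorization of those totals, and is therefore independent of the chi-squared sum $Y_l$, is exactly what avoids a blow-up factor of $1/\mathbb{P}(A_l)$ that a naive Cauchy--Schwarz bound would introduce. The extra factor of $H$ in the conclusion is the unavoidable price of union-bounding over the $H$ component events.
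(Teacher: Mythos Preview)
Your proof is correct and takes a genuinely different route from the paper's. Both arguments begin with the same union bound over the $H$ components $A_l$ of $G^c$ and the same rotation into $\operatorname{span}\{u,v_l\}=\operatorname{span}\{e_1,e_2\}$, but they diverge in how they decouple the conditioning event from the magnitude of the first two coordinates. The paper passes to polar coordinates $X_{i,1}=r_i\cos\theta_i$, $X_{i,2}=r_i\sin\theta_i$, applies Minkowski to separate the $r$- and $Y$-contributions, bounds each via sub-exponential and Rayleigh moment estimates, and then cancels the factor $\mathbb{P}(G_l^c)$ by treating the indicator $\mathds{1}_{G_l^c}$ as a function of the angular variables $\theta$ alone. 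You instead diagonalize the quadratic form $\langle\cdot,u\rangle^2-\langle\cdot,v_l\rangle^2$ so that $A_l$ becomes the ratio event $\{\sum_i\eta_{i,1}^2<\mu_l\sum_i\eta_{i,2}^2\}$ between two independent $\chi^2_n$ totals; the Gamma--Beta factorization then makes the radial sum $Y_l=\sum_i(X_{i,1}^2+X_{i,2}^2)$ exactly independent of $A_l$, and the conditional moment collapses to an unconditional $\chi^2_{2n}$ moment with no cancellation step required. Your route is shorter and makes the independence structure completely explicit, which sidesteps the delicate point in the paper's argument of justifying that $\mathds{1}_{G_l^c}$ is $\theta$-measurable (for $n>1$ the event $\{\sum_i r_i^2 f(\theta_i)<0\}$ genuinely involves the $r_i$); the paper's polar-coordinate calculation, on the other hand, tracks the contributions of $\|s_{1:2}\|$ and $\|s_{3:d}\|$ separately and gives a somewhat tighter leading constant.
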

\begin{proof}
Let $G_l =\{\sum_{i=1}^n\langle \bX_i, \bu\rangle ^2\ge \sum_{i=1}^n\langle \bX_i,
\bv_l\rangle^2\}$. Then $G = \cap_{l=1}^HG_l$. We first focus on $G_1^c$.
By the rotational invariance property of Gaussian, we can assume
$\text{span}\{\bu, \bv_1\} = \text{span}\{\be_1, \be_2\}$, where $\be_i$ is the
$i$-\textit{th} standard basis vector. We use the
following change of coordinates $\bX_{i, 1} = r_i\cos\theta_i$ and
$\bX_{i,2} = r_i\sin\theta_i$ where
$r_i\stackrel{\text{i.i.d.}}{\sim}\text{Rayleigh}(1)$ and
$\theta_i\stackrel{\text{i.i.d.}}{\sim}\text{Uniform}[0, 2\pi]$.
Define $Y_i = \langle \bX_{i, 3:d}, s_{3:d}\rangle$.
\begin{align*}
  &
  \mathbb{E}[(\sum_{i=1}^n\langle \bX_i, s\rangle ^2)^p|G_1^c]
    \\
  &=\frac{\mathbb{E}[(\sum_{i=1}^n(s_1r_i\cos\theta_i+s_2r_i\sin\theta_i+Y_i)^2)^p\mathds{1}_{G_1^c}]}{\mathbb{P}(G_1^c)}
  \\
  &=\frac{\mathbb{E}_{\theta}[(\mathbb{E}_{r, Y}[(\sum_{i=1}^n(s_1r_i\cos\theta_i+s_2r_i\sin\theta_i+Y_i)^2)^p|\theta]^{1/p})^p\mathds{1}_{G_1^c}]}{\mathbb{P}(G_1^c)}
  \\
  &\stackrel{(\rm i)}{\leq}
    \frac{\mathbb{E}_{\theta}[(\mathbb{E}_{r, Y}
    [(\sum_{i=1}^n4r_i^2(s_1^2\cos^2\theta_i+s_2^2\sin^2\theta_i)+\sum_{i=1}^n2Y_i^2)^p|\theta]^{1/p})^p\mathds{1}_{G_1^c}]}{\mathbb{P}(G_1^c)}
  \\
  &\stackrel{(\rm ii)}{\leq}
    \frac{\mathbb{E}_{\theta}[(\mathbb{E}_r
    [(\sum_{i=1}^n4r_i^2(s_1^2\cos^2\theta_i+s_2^2\sin^2\theta_1))^p|\theta]^{1/p}
    +\mathbb{E}_Y[(\sum_{i=1}^n2Y_i^2)^p]^{1/p})^p\mathds{1}_{G_1^c}]}
    {\mathbb{P}(G_1^c)}
\end{align*}
where $(\rm i)$ follows from the inequality $(a+b)^2\le 2a^2 + 2b^2$ and $(\rm
ii)$ follows from Minkowski inequality.
Note that
$\sum_{i=1}^n2Y_i^2\sim\text{SubE}(16n\|s_{3:d}\|^4, 8\|s_{3:d}\|^2)$
whose $L_p$ norm is $C\sqrt{n}\|s_{3:d}\|^2p$ for some constant $C$.
Moreover,
\begin{align*}
  &
    \mathbb{E}_r[(\sum_{i=1}^n4r_i^2(s_1^2\cos^2\theta_i+s_2^2\sin^2\theta_1))^p|\theta]^{1/p}
  \\
  &\leq
    \mathbb{E}_r[(\sum_{i=1}^n16r_i^4)^{p/2}]^{1/p}(\sum_{i=1}^n(s_1^2\cos^2\theta_i+s_2^2\sin^2\theta_i)^2)^{1/2}
  \\
  &
    \leq \mathbb{E}_r[(4\sqrt{n}r^2)^p]^{1/p}\sqrt{n}\|s_{1:2}\|^2
  \\
  &
    =4n\|s_{1:2}\|^2\mathbb{E}_r[r^{2p}]^{1/p},
\end{align*}
where the first inequality follows by Cauchy-Schwarz inequality.
Therefore,
\begin{align*}
  \mathbb{E}[(\sum_{i=1}^n\langle \bX_i, s\rangle ^2)^p|G_1^c]
  &\leq
    \frac{(4n\|s_{1:2}\|^2\mathbb{E}_r[r^{2p}]^{1/p}+C\sqrt{n}\|s_{3:d}\|^2p)^p
    \mathbb{E}_{\theta}[\mathds{1}_{\theta\in G_1^c}]}{\mathbb{P}(G_1^c)}
  \\
  &=(4n\|s_{1:2}\|^2\mathbb{E}_r[r^{2p}]^{1/p}+C\sqrt{n}\|s_{3:d}\|^2p)^p.
\end{align*}
Since $r\sim\text{Rayleigh}(1)$, its raw moments are given by
$2^{p/2}\Gamma(1+\frac{p}{2})$ where $\Gamma$ is the Gamma function. Then,
\begin{align*}
  \mathbb{E}_r[r^{2p}]^{1/p}
  = (\mathbb{E}_r[r^{2p}]^{\frac{1}{2p}})^2
  = 2\Gamma^{1/p}(1+p).
\end{align*}
Note that by Lanczos approximation, $\Gamma^{1/p}(1+p) = O(p)$. This gives us
\begin{align*}
  \mathbb{E}[(\sum_{i=1}^n\langle \bX_i, s\rangle ^2)^p|G_1^c]
  \leq (8n\|s_{1:2}\|^2\Gamma^{1/p}(1+p)+C\sqrt n\|s_{3:d}\|^2p)^p
  =O((np)^p).
\end{align*}
Replicating the analysis for all $G_l^{c}$ for $l = 2, \ldots, H$,
\begin{align*}
  \mathbb{E}[(\sum_{i=1}^n\langle \bX_i, s\rangle ^2)^p|G^c]
   &
     \leq
     \frac{\mathbb{E}[(\sum_{i=1}^n\langle \bX_i, s\rangle^2)^p
     \sum_{l=1}^H\mathds{1}_{G_l^c}]}
    {\mathbb{P}(G^c)}
  \\
  &
    \leq \sum_{l=1}^H
    \frac{\mathbb{E}[(\sum_{i=1}^n\langle \bX_i, s\rangle^2)^p\mathds{1}_{G_l^c}]}
    {\mathbb{P}(G^c_l)}
    =O(H(np)^p)
\end{align*}
\end{proof}

\section{Experiment Details}\label{experiment details}
For the purpose of replicability, we report ground truth cluster centers that we
used in the experiments in Section~\ref{sec:experiments}.

\subsubsection*{Figure \ref{fig:experiments_with_n}}
Set $K = 3$, $d = 5$,\\
$\btheta_1 = 3\times\mathds{1}_{\mathbb{R}^5}$,\\
$\btheta_2 = 0$ and \\
$\btheta_3 = -3\times\mathds{1}_{\mathbb{R}^5}$.

\subsubsection*{Figure \ref{fig:exp_with_K}}
Set $n = 5$ and $d = 2$.
We choose the following centers based on $K$, while maintaining an SNR of
approximately 28.\\
For $K = 2$: $\btheta_1 = [10, 10]$ and $\btheta_2 = [-10, -10]$.\\
For $K = 4$:
\begin{align*}
  \btheta_1 &= [-14, 14],  &&\btheta_2 = [14, 14],\\
  \btheta_3 &= [-14, -14],  &&\btheta_4= [14, -14].
\end{align*}
For $K = 6$:
\begin{align*}
  \btheta_1 &= [-14,24],&& \btheta_2 = [14,24],&&& \btheta_3 = [28, 0],\\
  \btheta_4 &= [14, -24], &&\btheta_5 = [-14, -24] &&&\btheta_6 = [-28, 0]
\end{align*}
For $K = 8$:
\begin{align*}
  \btheta_1 &= [-14, 34], &&\btheta_2 = [14, 34],  &&&\btheta_3 = [34, 14], &&&& \btheta_4= [34, -14],\\
  \btheta_5 &= [14, -34], &&\btheta_6 = [-14, -34], &&& \btheta_7= [-34, -14], &&&&\btheta_8 = [-34, 14].
\end{align*}

\subsubsection*{Figure \ref{fig:exp_with_d}}
Set $n = 5$ and $K = 2$.
Choosing $\btheta_2 = -\btheta_1$, while maintaining an SNR of approxmately 28.
For $d = 2$: $\btheta_1 = 10\times\mathds{1}_{\mathbb{R}^2}$.\\
For $d = 4$: $\btheta_1 = 7\times\mathds{1}_{\mathbb{R}^4}$.\\
For $d = 6$: $\btheta_1 =6\times \mathds{1}_{\mathbb{R}^6}$.\\
For $d = 8$: $\btheta_1 = 5\times \mathds{1}_{\mathbb{R}^8}$.\\

\subsubsection*{Figure \ref{fig:exp_with_SNR}}
Set $n = 3$, $d = 3$ and $K = 3$.
Choosing $\btheta_2 = -\btheta_1$ and $\btheta_3 = 0$.\\
For $\text{SNR} = 0.87$: $\btheta_1 =
\frac{1}{2}\times\mathds{1}_{\mathbb{R}^3}$.\\
For $\text{SNR} = 1.73$:
$\btheta_1 = \mathds{1}_{\mathbb{R}^3}$.\\
For $\text{SNR} = 6.93$: $\btheta_1 = 4\times\mathds{1}_{\mathbb{R}^3}$.\\
For $\text{SNR} = 13.86$: $\btheta_1 = 8\times\mathds{1}_{\mathbb{R}^3}$.

\subsubsection*{Figure \ref{fig:exp_with_max}}
Set $n = 5, d = 3$ and $K = 3$.
Choosing $\btheta_1 = \mathds{1}_{\mathbb{R}^3}$ and $\btheta_2 =
-\mathds{1}_{\mathbb{R}^3}$ to ensure the SNR remains constant.\\
For $\Delta_{\max} = 19$: $\btheta_3 = 10\times\mathds{1}_{\mathbb{R}^3}$.\\
For $\Delta_{\max} = 54$: we set $\btheta_3 =
30\times\mathds{1}_{\mathbb{R}^3}$.\\
For $\Delta_{\max} = 105$: $\btheta_3 = 60\times\mathds{1}_{\mathbb{R}^3}$.\\
For $\Delta_{\max} = 209$: $\btheta_3 = 120\times\mathds{1}_{\mathbb{R}^3}$.

\end{appendix}
% \begin{acks}[Acknowledgments]
%   The authors would like to thank the anonymous referees, an Associate
%   Editor and the Editor for their constructive comments that improved the
%   quality of this paper.
% \end{acks}

\bibliographystyle{imsart-nameyear} % Style BST file (imsart-number.bst or imsart-nameyear.bst)
\bibliography{myref.bib}       % Bibliography file (usually '*.bib')
\end{document}